\documentclass{article}

\PassOptionsToPackage{pagebackref=true}{hyperref}
\usepackage[final]{corl_2020} 

\usepackage{enumitem}
\usepackage{amsmath,amssymb,amsthm,commath,dsfont,complexity,mathtools,nicefrac,booktabs,caption,tabularx}
\usepackage[linesnumbered,ruled,noend]{algorithm2e}

\newcommand{\defeq}{\vcentcolon=}

\def\ddefloop#1{\ifx\ddefloop#1\else\ddef{#1}\expandafter\ddefloop\fi}
\def\ddef#1{\expandafter\def\csname b#1\endcsname{\ensuremath{\mathbb{#1}}}}
\ddefloop ABCDEFGHIJKLMNOPQRSTUVWXYZ\ddefloop
\def\ddef#1{\expandafter\def\csname c#1\endcsname{\ensuremath{\mathcal{#1}}}}
\ddefloop ABCDEFGHIJKLMNOPQRSTUVWXYZ\ddefloop

\DeclareMathOperator*{\argmin}{arg\,min}
\DeclareMathOperator*{\argmax}{arg\,max}
\DeclareMathOperator{\softmax}{softmax}

\newcommand{\ip}[2]{\langle #1, #2 \rangle}
\renewcommand{\norm}[1]{\| #1 \|}

\newtheorem{theorem}{Theorem}  
\newtheorem*{theorem*}{Theorem}

\newtheorem{lemma}[theorem]{Lemma}
\newtheorem{assumption}{Assumption}  
\theoremstyle{definition}
\newtheorem{definition}{Definition}  
\theoremstyle{remark}
\newtheorem{remark}{Remark}

\def\eps{\epsilon}

\title{Efficient Competitive Self-Play Policy Optimization}

%


\author{
\begin{tabular}{ccc}
Yuanyi Zhong$^1$ & Yuan Zhou$^2$ & Jian Peng$^1$ \\
\texttt{yuanyiz2@illinois.edu} & \texttt{yuanz@illinois.edu} & \texttt{jianpeng@illinois.edu}
\end{tabular}\\
$^1$ Department of Computer Science\\
$^2$ Department of Industrial and Enterprise Systems Engineering\\
University of Illinois at Urbana-Champaign}

\begin{document}
\maketitle


\begin{abstract}
    Reinforcement learning from self-play has recently reported many successes. Self-play, where the agents compete with themselves, is often used to generate training data for iterative policy improvement. In previous work, heuristic rules are designed to choose an opponent for the current learner. Typical rules include choosing the latest agent, the best agent, or a random historical agent. However, these rules may be inefficient in practice and sometimes do not guarantee convergence even in the simplest matrix games. In this paper, we propose a new algorithmic framework for competitive self-play reinforcement learning in two-player zero-sum games. We recognize the fact that the Nash equilibrium coincides with the saddle point of the stochastic payoff function, which motivates us to borrow ideas from classical saddle point optimization literature. Our method trains several agents simultaneously, and intelligently takes each other as opponent based on simple adversarial rules derived from a principled perturbation-based saddle optimization method. We prove theoretically that our algorithm converges to an approximate equilibrium with high probability in convex-concave games under standard assumptions. Beyond the theory, we further show the empirical superiority of our method over baseline methods relying on the aforementioned opponent-selection heuristics in matrix games, grid-world soccer, Gomoku, and simulated robot sumo, with neural net policy function approximators.
\end{abstract}

\keywords{self-play, policy optimization, two-player zero-sum game, multiagent}

\thispagestyle{plain}  

\section{Introduction}
\label{sec:intro}

Reinforcement learning (RL) from self-play has drawn tremendous attention over the past few years. Empirical successes have been observed in several challenging tasks, including Go \cite{silver2016mastering,silver2017mastering,silver2018general}, simulated hide-and-seek \cite{baker2019emergent}, simulated sumo wrestling \cite{bansal2017emergent}, Capture the Flag \cite{jaderberg2019human}, Dota 2 \cite{berner2019dota}, StarCraft II \cite{vinyals2019grandmaster}, and poker \cite{brown2019superhuman}, to name a few. During RL from self-play, the learner collects training data by competing with an opponent selected from its past self or an agent population. Self-play presumably creates an auto-curriculum for the agents to learn at their own pace. At each iteration, the learner always faces an opponent that is comparably in strength to itself, allowing continuous improvement.

The way the opponents are selected often follows human-designed heuristic rules in prior works. For example, AlphaGo \cite{silver2016mastering} always competes with the latest agent, while the later generation AlphaGo Zero \cite{silver2017mastering} and AlphaZero \cite{silver2018general} generate self-play data with the maintained best historical agent. In specific tasks, such as OpenAI's sumo wrestling environment, competing against a randomly chosen historical agent leads to the emergence of more diverse behaviors \cite{bansal2017emergent} and more stable training than against the latest agent \cite{al2018continuous}. In population-based training \cite{jaderberg2019human,liu2019emergent} and AlphaStar \cite{vinyals2019grandmaster}, an elite or random agent is picked from the agent population as the opponent.

Unfortunately, these rules may be inefficient and sometimes ineffective in practice and do not necessarily enjoy convergence guarantee to the ``average-case optimal'' solution even in tabular matrix games. In fact, in the simple Matching Pennies game, self-play with the latest agent fails to converge and falls into an oscillating behavior, as shown in Sec.~\ref{sec:exp}.

In this paper, we want to develop an algorithm that adopts a principally-derived opponent-selection rule to alleviate some of the issues mentioned above. This requires clarifying first what the solution of self-play RL should be. From the game-theoretical perspective, Nash equilibrium is a fundamental solution concept that characterizes the desired ``average-case optimal'' strategies (policies). When each player assumes other players also play their equilibrium strategies, no one in the game can gain more by unilaterally deviating to another strategy. Nash, in his seminal work \cite{nash1951non}, has established the existence result of mixed-strategy Nash equilibrium of any finite game. Thus solving for a mixed-strategy Nash equilibrium is a reasonable goal of self-play RL.

We consider the particular case of two-player zero-sum games, a reasonable model for the competitive environments studied in the self-play RL literature. In this case, the Nash equilibrium is the same as the (global) saddle point and as the solution of the minimax program $\min_{x\in X} \max_{y\in Y} f(x,y)$. We denote $x,y$ as the strategy profiles (in RL language, policies) and $f$ as the loss for $x$ or utility/reward for $y$. A saddle point $(x^*, y^*) \in X \times Y $, where $X, Y$ are the sets of all possible mixed-strategies (stochastic policies) of the two players, satisfies the following key property
\begin{equation}
    f(x^*, y) \le f(x^*, y^*) \le f(x, y^*),\; \forall x \in X, \forall y \in Y .
    \label{eq:saddle}
\end{equation}

Connections to the saddle point problem and game theory inspire us to borrow ideas from the abundant literature for finding saddle points in the optimization field \cite{arrow1958studies,korpelevich1976extragradient,kallio1994perturbation,nedic2009subgradient} and for finding equilibrium in the game theory field \cite{zinkevich2008regret,brown1951iterative,singh2000nash}. One particular class of method, i.e., the perturbation-based subgradient methods to find the saddles \cite{korpelevich1976extragradient,kallio1994perturbation}, is especially appealing. This class of method directly builds upon the inequality properties in Eq.~\ref{eq:saddle}, and has several advantages: 
(1) Unlike some algorithms that require knowledge of the game dynamic \cite{silver2016mastering,silver2017mastering,nowe2012game}, it requires only subgradients; thus, it is easy to adapt to policy optimization with estimated policy gradients. 
(2) It is guaranteed to converge in its last iterate instead of an average iterate, hence alleviates the need to compute any historical averages as in \cite{brown1951iterative,singh2000nash,zinkevich2008regret}, which can get complicated when neural nets are involved \cite{heinrich2016deep}. 
(3) Most importantly, it prescribes a simple principled way to adversarially choose opponents, which can be naturally implemented with a concurrently-trained agent population.

To summarize, we adapt the perturbation-based methods of classical saddle point optimization into the model-free self-play RL regime. This results in a novel population-based policy gradient method for competitive self-play RL described in Sec.~\ref{sec:method}. Analogous to the standard model-free RL setting, we assume only ``naive'' players \cite{jafari2001no} where the game dynamic is hidden and only rewards for their own actions are revealed. This enables broader applicability than many existing algorithms \cite{silver2016mastering,silver2017mastering,nowe2012game} to problems with mismatched or unknown game dynamics, such as many real-world or simulated robotic problems. In Sec.~\ref{sec:analysis}, we provide an approximate convergence theorem of the proposed algorithm for convex-concave games as a sanity check. Sec.~\ref{sec:exp} shows extensive experiment results favoring our algorithm's effectiveness on several games, including matrix games, a game of grid-world soccer, a board game, and a challenging simulated robot sumo game. Our method demonstrates better per-agent sample efficiency than baseline methods with alternative opponent-selection rules. Our trained agents can also outperform the agents trained by other methods on average.

\section{Related Work}
\label{sec:related}

Reinforcement learning trains a single agent to maximize the expected return through interacting with an environment \cite{sutton2018reinforcement}. Multiagent reinforcement learning (MARL), where two-agent is a special case, concerns multiple agents taking actions in the same environment \cite{littman1994markov}. Self-play is a training paradigm to generate data for learning in MARL and has led to great successes, achieving superhuman performance in many domains \cite{tesauro1995temporal,silver2016mastering,brown2019superhuman}. Applying RL algorithms naively as independent learners in MARL sometimes produces strong agents \cite{tesauro1995temporal}, but often does not converge to the equilibrium. People have studied ways to extend RL algorithms to MARL, e.g., minimax-Q \cite{littman1994markov}, Nash-Q \cite{hu2003nash}, WoLF-PG \cite{bowling2002multiagent}, etc. However, most of these methods 
are designed for tabular RL only therefore not readily applicable to continuous state action spaces and complex policy functions where gradient-based policy optimization methods are preferred. Very recently, there is some work on the non-asymptotic regret analysis of tabular self-play RL \cite{bai2020provable}. While our work roots from a practical sense, these work enrich the theoretical understanding and complement ours.

There are algorithms developed from the game theory and online learning perspective \cite{lanctot2017unified,nowe2012game,cardoso2019competing}, notably Tree search, Fictitious self-play \cite{brown1951iterative} and Regret minimization \cite{jafari2001no,zinkevich2008regret}, and Mirror descent \cite{mertikopoulos2018mirror,rakhlin2013optimization}. Tree search such as minimax and alpha-beta pruning is particularly effective in small-state games. Monte Carlo Tree Search (MCTS) is also effective in Go \cite{silver2016mastering}. However, Tree search requires learners to know the exact game dynamics.
The latter ones typically require maintaining some historical quantities. In Fictitious play, the learner best responds to a historical average opponent, and the average strategy converges. Similarly, the total historical regrets in all (information) states are maintained in (counterfactual) regret minimization \cite{zinkevich2008regret}. The extragradient rule in \cite{mertikopoulos2018mirror} is a special case of the adversarial perturbation we rely on in this paper. Most of those algorithms are designed only for discrete state action environments. Special care has to be taken with neural net function approximators \cite{heinrich2016deep}. On the contrary, our method enjoys last-iterate convergence under proper convex-concave assumptions, does not require the complicated computation of averaging neural nets, and is readily applicable to continuous environments through policy optimization.

In two-player zero-sum games, the Nash equilibrium coincides with the saddle point. This enables the techniques developed for finding saddle points. While some saddle-point methods also rely on time averages \cite{nedic2009subgradient}, a class of perturbation-based gradient method is known to converge under mild convex-concave assumption for deterministic functions \cite{kallio1994perturbation,korpelevich1976extragradient,hamm2018k}. 
We develop an efficient sampling version of them for stochastic RL objectives, which leads to a more principled and effective way of choosing opponents in self-play. Our adversarial opponent-selection rule bears a resemblance to \cite{gleave2019adversarial}. However, the motivation of our work (and so is our algorithm) is to improve self-play RL, while \cite{gleave2019adversarial} aims at attacking deep self-play RL policies.
Though the algorithm presented here builds upon policy gradient, the same framework may be extended to other RL algorithms such as MCTS due to a recent interpretation of MCTS as policy optimization \cite{grill2020monte}. Finally, our way of leveraging Eq.~\ref{eq:saddle} in a population may potentially work beyond gradient-based RL, e.g., in training generative adversarial networks similarly to \cite{hamm2018k} because of the same minimax formulation.

\section{Method}
\label{sec:method}

Classical game theory defines a two-player zero-sum game as a tuple $(X,Y,f)$ where $X, Y$ are the sets of possible strategies of Players 1 and 2 respectively, and $f\colon X \times Y \mapsto \bR$ is a mapping from a pair of strategies to a real-valued utility/reward for Player 2. The game is zero-sum (fully competitive), and Player 1's reward is $-f$. 

We consider mixed strategies (corresponding to stochastic policies in RL). In the discrete case, $X$ and $Y$ could be all probability distributions over the sets of actions. When parameterized function approximators are used, $X,Y$ can be the spaces of all policy parameters.

Multiagent RL formulates the problem as a Stochastic Game \cite{shapley1953stochastic}, an extension to Markov Decision Processes (MDP). Denote $a_t$ as the action of Player 1 and $b_t$ as the action of Player 2 at time $t$, let $T$ be the time limit of the game, then the stochastic payoff $f$ writes as
\begin{equation}
    f(x, y) = \bE_{ \substack{ a_t \sim \pi_x, b_t \sim \pi_y, \\ s_{t+1} \sim P(\cdot | s_t, a_t, b_t) } } \left[ \sum_{t=0}^T \gamma^t r(s_t, a_t, b_t) \right] .
\end{equation}
The state sequence $\{s_t\}_{t=0}^T$ follows a transition dynamic $P(s_{t+1} | s_t, a_t, b_t)$. Actions are sampled according stochastic policies $\pi_x(\cdot|s_t)$ and $\pi_y(\cdot|s_t)$. And $r(s_t, a_t, b_t)$ is the reward (payoff) for Player 2 at time $t$, determined jointly by the state and actions. We use the term `agent' and `player' interchangeably. In deep RL, $x$ and $y$ are the policy neural net parameters. In some cases \cite{silver2016mastering}, we can enforce $x=y$ by sharing parameters if the game is impartial or we do not want to distinguish them. The discounting factor $\gamma$ weights short and long term rewards and is optional. Note that when one agent is fixed - taking $y$ as an example - the problem $x$ is facing reduces to an MDP, if we define a new state transition dynamic
$ P_{\text{new}} (s_{t+1}|s_t, a_t) = \sum_{b_t} P(s_{t+1}|s_t,a_t,b_t) \pi_y(b_t|s_t) $  
and a new reward
$ r_{\text{new}} (s_t, a_t) = \sum_{b_t} r(s_t,a_t,b_t) \pi_y(b_t|s_t). $

The naive algorithm provably works in strictly convex-concave games (where $f$ is strictly convex in $x$ and strictly concave in $y$) under assumptions in \cite{arrow1958studies}. 
However, in general, it does not enjoy last-iterate convergence to the Nash equilibrium. Even for simple games such as Matching Pennies and Rock Paper Scissors, as we shall see in our experiments, the naive algorithm generates cyclic sequences of $x^k, y^k$ that orbit around the equilibrium. This motivates us to study the perturbation-based method that converges under weaker assumptions.

Recall that the Nash equilibrium has to satisfy the saddle constraints Eq.~\ref{eq:saddle}: $f(x^*, y) \le f(x^*, y^*) \le f(x, y^*)$. The perturbation-based method builds upon this property \cite{nedic2009subgradient,kallio1994perturbation,korpelevich1976extragradient} and directly optimize for a solution that meets the constraints. They find perturbed points $u$ of $x$ and $v$ of $y$, and use gradients at $(x,v)$ and $(u,y)$ to optimize $x$ and $y$. Under some regularity assumptions, gradient direction from a single perturbed point is adequate for proving convergence \cite{nedic2009subgradient} for (not strictly) convex-concave functions. They can be easily extended to accommodate gradientbased policy optimization and stochastic RL objective Eq.~\ref{eq:f}.

\begin{algorithm}[t]
\small
\KwIn{$N$: number of iterations, $\eta_k$: learning rates, $m_k$: sample size, $n$: population size, $l$: number of inner policy updates; }
\KwResult{$n$ pairs of policies; }
Initialize $(x_i^0, y_i^0), i=1,2,\ldots n$\;
\For{$k = 0,1,2,\ldots N-1$} {
    Evaluate $\hat{f}(x_i^k, y_j^k), \forall i,j\in 1\ldots n$ with Eq.~\ref{eq:f} and sample size $m_k$\;
    \For{$i = 1,\ldots n$} {
        Construct candidate opponent sets $C_{y_i}^k = \{ y_j^k: j=1\ldots n \}$ and $C_{x_i}^k = \{ x_j^k: j=1\ldots n \}$\;
        Find perturbed $ v_i^k = \argmax_{y \in C_{y_i}^k} \hat{f}(x_i^k, y) $, perturbed $ u_i^k = \argmin_{x \in C_{x_i}^k} \hat{f}(x, y_i^k)$\;
        Invoke a single-agent RL algorithm (e.g., A2C, PPO) on $x_i^k$ for $l$ times that: \\
        \Indp
        Estimate policy gradients $\hat{g}_{x_i}^k = \hat{\nabla}_x f(x_i^k, v_i^k)$ with sample size $m_k$ \,\,(e.g., Eq.~\ref{eq:pg})\;
        Update policy by $ x_i^{k+1} \gets x_i^k - \eta_k \hat{g}_{x_i}^k $ \,\,(or RmsProp)\;
        \Indm
        Invoke a single-agent RL algorithm (e.g., A2C, PPO) on $y_i^k$ for $l$ times that: \\
        \Indp
        Estimate policy gradients $\hat{g}_{y_i}^k = \hat{\nabla}_y f(u_i^k, y_i^k)$ with sample size $m_k$\;
        Update policy by $ y_i^{k+1} \gets {y_i}^k + \eta_k \hat{g}_{y_i}^k $ \,\,(or RmsProp)\;
        \Indm
    }
}
\Return $\{(x_i^N, y_i^N)\}_{i=1}^n$\;
\caption{Perturbation-based self-play policy optimization of an $n$ agent population.}
\label{algo:n}
\end{algorithm}

We propose to find the perturbations from an agent population, resulting in the algorithm outlined in Alg.~\ref{algo:n}. The algorithm trains $n$ pairs of agents simultaneously. The $n^2$ pairwise competitions are run as the evaluation step (Alg.~\ref{algo:n} Line 3), costing $n^2m_k$ trajectories. To save sample complexity, we may use these rollouts to do one policy update as well. Then a simple adversarial rule (Eq.~\ref{eq:opp}) is adopted in Alg.~\ref{algo:n} Line 6 to choose the opponents adaptively.
The intuition is that $v_i^k$ and $u_i^k$ are the most challenging opponents in the population that the current $x_i$ and $y_i$ are facing.
\begin{equation}
\label{eq:opp}
v_i^k = \argmax_{y \in C_{y_i}^k} \hat{f}(x_i^k, y),\quad u_i^k = \argmin_{x \in C_{x_i}^k} \hat{f}(x, y_i^k) .
\vspace{-5pt}
\end{equation}
The perturbations $v_i^k$ and $u_i^k$ always satisfy $ f(x^k, v^k) \ge f(u^k, y^k) $, since $\max_{y \in C_{y_i}^k} \hat{f}(x_i^k, y) \ge \hat{f}(x_i^k, y_i^k) \ge \min_{x \in C_{x_i}^k} \hat{f}(x, y_i^k)$. 
Then we run gradient descent on $x_i^k$ with the perturbed $v_i^k$ as opponent to minimize $f(x_i^k, v_i^k)$, and run gradient ascent on $y^k$ to maximize $f(u^k, y^k)$. Intuitively, the duality gap between $\min_x \max_y f(x,y)$ and $\max_y \min_x f(x,y)$, approximated by $f(x^k, v^k) - f(u^k, y^k)$, is reduced, leading to $(x^k,y^k)$ converging to the saddle point (equilibrium). 

We build the candidate opponent sets in Line 5 of Alg.~\ref{algo:n} simply as the concurrently-trained $n$-agent population. Specifically, $C_{y_i}^k = \left\{y_1^k,\ldots,y_n^k\right\}$ and $C_{x_i}^k = \left\{x_1^k,\ldots,x_n^k\right\}$. This is due to the following considerations. An alternative source of candidates is the fixed known agents such as a rule-based agent, which may be unavailable in practice. Another source is the extragradient methods \cite{korpelevich1976extragradient,mertikopoulos2018mirror}, where extra gradient steps are taken on $y$ before optimizing $x$. The extragradient method can be thought of as a local approximation to Eq.~\ref{eq:opp} with a neighborhood candidate opponent set, and thus related to our method. However, this method could be less efficient because the trajectory sample used in the extragradient estimation is wasted as it does not contribute to actually optimizing $y$. Yet another source could be the past agents. This choice is motivated by Fictitious play and ensures that the current learner always defeats a past self. But, as we shall see in the experiment section, self-play with a random past agent learns slower than our strategy. We expect all agents in the population in our algorithm to be strong, thus provide stronger learning signals.


Finally, we use Monte Carlo estimates to compute the values and gradients of $f$. In the classical game theory setting, the game dynamic and payoff are known, so it is possible to compute the exact values and gradients of $f(x,y)$. But this is a rather restricted setting. In model-free MARL, we have to collect roll-out trajectories to estimate both the function values through policy evaluation and gradients through Policy gradient theorem \cite{sutton2018reinforcement}. 
After collecting $m$ independent trajectories $ \left\{ \{ (s_t^i, a_t^i, r_t^i) \}_{t=0}^{T} \right\}_{i=1}^m $, we can estimate $f(x,y)$ by
\begin{equation}
    \label{eq:f}
    \hat{f}(x,y) = \frac1m \sum_{i=1}^m \sum_{t=0}^T \gamma^t r_t^i .
\end{equation}
\vspace{-10pt}

And given estimates $\hat Q_x(s,a; y)$ to the state-action value function $Q(s,a)$ (assuming an MDP with $y$ as a fixed opponent of $x$), we can construct an estimator for $\nabla_x f(x,y)$ (and similarly for $\nabla_y f$) by
\begin{equation}
    \label{eq:pg}
    \hat \nabla_x f(x, y) \propto \frac1m \sum_{i=1}^m \sum_{t=0}^T \nabla_x \log \pi_x(a_t^i|s_t^i) \hat Q_x (s_t^i, a_t^i; y) .
\end{equation}
\vspace{-10pt}
\section{Convergence Analysis}
\label{sec:analysis}
We establish an asymptotic convergence result in the Monte Carlo policy gradient setting in Thm.~\ref{thm:converge_pg} for a variant of Alg.~\ref{algo:n} under regularity assumptions. 
This algorithm variant sets $l=1$ and uses the vanilla SGD as the policy optimizer. We add a stop criterion $\hat{f}(x_i^k, v^k) - \hat{f}(u^k, y_i^k) \lesssim \eps$ after Line 6 with an accuracy parameter $\eps$. The full proof can be found in the supplementary. Since the algorithm is symmetric between agents in the population, we drop the subscript $i$ for text clarity.

\begin{assumption}
\label{assum:compact}
$X, Y \subseteq \bR^d$ are compact sets. As a consequence, there exists $D$ s.t $\forall x_1, x_2\in X,\;\norm{x_1 - x_2}_{1} \le D $ and $\forall y_1, y_2\in Y,\;\norm{y_1 - y_2}_{1} \le D $. $C_y^k, C_x^k$ are compact subsets of $X$ and $Y$.
Further, assume $f\colon X\times Y \mapsto \bR$ is a bounded convex-concave function.
\end{assumption}

\begin{theorem}[Convergence with exact gradients \cite{kallio1994perturbation}]
\label{thm:converge_exact}
Under A\ref{assum:compact}, 
assume a sequence $(x^k,y^k) \to (\hat{x}, \hat{y}) \land f(x^k,v^k) - f(u^k,y^k) \to 0$ implies $(\hat{x}, \hat{y})$ is a saddle point,
Alg.~\ref{algo:n} (replacing all estimates with true values) produces a sequence of points $\left\{ (x^k, y^k) \right\}_{k=0}^{\infty}$ convergent to a saddle.
\end{theorem}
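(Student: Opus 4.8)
\noindent\emph{Proof plan.}
The plan is to run a quasi-Fej\'er argument on the squared Euclidean distance from $(x^k,y^k)$ to a saddle point, with the per-step decrease driven entirely by the quantity the algorithm actually controls, $\delta_k \defeq f(x^k,v^k)-f(u^k,y^k)$, which is always nonnegative since $y^k\in C_y^k$ and $x^k\in C_x^k$ give $f(x^k,v^k)\ge f(x^k,y^k)\ge f(u^k,y^k)$. Fix any saddle point $(x^*,y^*)$ (one exists by the minimax theorem: $X,Y$ are compact by Assumption~\ref{assum:compact} and, as the term ``convex--concave'' presupposes, convex, and $f$ is bounded) and set $\Phi_k \defeq \norm{x^k-x^*}^2+\norm{y^k-y^*}^2$. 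I would expand the exact-gradient updates $x^{k+1}=x^k-\eta_k\nabla_x f(x^k,v^k)$, $y^{k+1}=y^k+\eta_k\nabla_y f(u^k,y^k)$, bound the cross terms using convexity of $f(\cdot,v^k)$ and concavity of $f(u^k,\cdot)$, and bound the squared-gradient terms by $\eta_k^2 G^2$ where $G$ uniformly bounds the gradient norms on the compact domain (here one needs $f\in C^1$ near $X\times Y$, absorbing a harmless factor of $2$). This yields
\begin{equation*}
\Phi_{k+1} \le \Phi_k - 2\eta_k\big[\big(f(x^k,v^k)-f(x^*,v^k)\big) + \big(f(u^k,y^*)-f(u^k,y^k)\big)\big] + \eta_k^2 G^2 .
\end{equation*}
The crucial step is the ``saddle cancellation'': since $(x^*,y^*)$ is a saddle, $f(x^*,v^k)\le f(x^*,y^*)\le f(u^k,y^*)$, so the bracket is at least $f(x^k,v^k)-f(u^k,y^k)=\delta_k\ge 0$, and the recursion collapses to $\Phi_{k+1}\le\Phi_k-2\eta_k\delta_k+\eta_k^2 G^2$.

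From here I would use the Robbins--Monro conditions $\sum_k\eta_k=\infty$, $\sum_k\eta_k^2<\infty$ (implicit in the ``learning rates $\eta_k$'' of Alg.~\ref{algo:n}): the sequence $b_k\defeq\Phi_k+G^2\sum_{j\ge k}\eta_j^2$ is nonincreasing and bounded below by $0$, hence converges, so $\Phi_k$ converges \emph{for every choice of reference saddle}; telescoping $2\eta_k\delta_k\le b_k-b_{k+1}$ gives $\sum_k\eta_k\delta_k<\infty$, which with $\sum_k\eta_k=\infty$ and $\delta_k\ge 0$ forces $\liminf_k\delta_k=0$. Now invoke compactness of $X\times Y$: choose a subsequence along which $\delta_{k_j}\to 0$ and, refining once more, $(x^{k_j},y^{k_j})\to(\hat x,\hat y)$. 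Since $f(x^{k_j},v^{k_j})-f(u^{k_j},y^{k_j})=\delta_{k_j}\to 0$, the theorem's standing hypothesis applies to this subsequence and certifies that $(\hat x,\hat y)$ is a saddle point. Finally, re-centering the Fej\'er bound at $(x^*,y^*)=(\hat x,\hat y)$: $\Phi_k$ converges and has the subsequence $\norm{x^{k_j}-\hat x}^2+\norm{y^{k_j}-\hat y}^2\to 0$, so its limit is $0$; hence $(x^k,y^k)\to(\hat x,\hat y)$, a saddle point.

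I expect the main obstacle to be assembling the two ``soft'' ingredients cleanly. First, the saddle cancellation is what converts an uncontrolled per-step descent into the operational, always-nonnegative gap $\delta_k$ that the perturbation rule was designed to shrink, and getting the inequality chain and the roles of $v^k,u^k,x^*,y^*$ exactly right is the delicate part. Second --- and this is where the theorem's hypothesis is indispensable --- one must upgrade ``$\liminf_k\delta_k=0$ along a subsequence'' to last-iterate convergence of the whole sequence: by itself the Fej\'er argument only shows each $\Phi_k(\cdot)$ converges and that some cluster point is a saddle, not that the iterates settle onto it; the assumption ``convergent subsequence with vanishing gap $\Rightarrow$ saddle'' is precisely the bridge, and re-centering the Lyapunov function at that cluster point closes the loop. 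The remainder is routine bookkeeping: the uniform gradient bound $G$ from regularity of $f$ on the compact sets, the Robbins--Monro step-size hypotheses, and the appeal to the minimax theorem for existence of the reference saddle.
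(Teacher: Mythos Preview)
Your argument is correct and follows a standard quasi-Fej\'er template, but it diverges from the paper's proof in one essential respect: the step-size rule. The paper (following Kallio--Ruszczy\'nski) does not assume the Robbins--Monro conditions $\sum_k\eta_k=\infty$, $\sum_k\eta_k^2<\infty$; it instead takes the \emph{adaptive} choice $\eta_k < E_k/(\|g_x^k\|^2+\|g_y^k\|^2)$, where $E_k=f(x^k,v^k)-f(u^k,y^k)$. With this rule the Lyapunov recursion $W_{k+1}\le W_k-2\eta_k E_k+\eta_k^2(\|g_x^k\|^2+\|g_y^k\|^2)$ is strictly monotone, with per-step drop of order $E_k^2/\|g\|^2$, so boundedness of $W_k$ forces $E_k\to 0$ along the \emph{whole} sequence directly---no need to pass through $\liminf$ and re-extract a subsequence. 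What you gain with the Robbins--Monro route is that the step sizes can be fixed a priori without computing $E_k$, and your proof is in fact more careful than the paper's on one point: the paper simply writes ``the convergent point $\lim_k(x^k,y^k)$'' without ever establishing that the iterates (as opposed to $W_k$) converge, whereas your re-centering argument---identify a saddle cluster point via the standing hypothesis, then redo the Fej\'er bound centered there---cleanly closes that gap. Either route works; the paper's is shorter because strict monotonicity does more of the work, yours is more robust and fills in a step the paper elides.
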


The above case with exact sub-gradients is easy since both $f$ and $\nabla f$ are deterministic. In RL setting, we construct estimates for $f(x,y)$ (Eq.~\ref{eq:f}) and $\nabla_x f, \nabla_y f$ (Eq.~\ref{eq:pg}) with samples.
The intuition is that, when the samples are large enough, we can bound the deviation between the true values and estimates by concentration inequalities, then the proof outline similar to \cite{kallio1994perturbation} also goes through. 

Thm.~\ref{thm:converge_pg} requires an extra assumption on the boundedness of $\hat Q$ and gradients. By showing the policy gradient estimates are approximate sub-/super-gradients of $f$, we are able to prove that the output $(x_i^N, y_i^N)$ of Alg.~\ref{algo:n} is an approximate Nash equilibrium with high probability.

\begin{assumption}
\label{assum:grad_bound}
The Q value estimation $\hat{Q}$ is unbiased and bounded by $R$, and the policy has bounded gradient $ \norm{ \nabla \log \pi_{\theta}(a|s) }_{\infty} \le B$.
\end{assumption}

\begin{theorem}[Convergence with policy gradients]  
\label{thm:converge_pg}
Under A\ref{assum:compact}, A\ref{assum:grad_bound},
let sample size at step $k$ be $m_k \ge \Omega\del{ \frac{R^2 B^2 D^2}{\eps^2} \log \frac{d}{ \delta 2^{-k}} }$ 
and learning rate $\eta_k = \alpha  \frac{ \hat{E}_k - 2\eps}{ \| \hat{g}_x^k \|^2 + \| \hat{g}_y^k \|^2 }$ with $ 0 \le \alpha \le 2 $,
then with probability at least $1 - \cO(\delta)$, the Monte Carlo version of Alg.~\ref{algo:n} generates a sequence of points $\left\{ (x^k, y^k) \right\}_{k=0}^{\infty}$ convergent to an $\cO(\eps)$-approximate equilibrium $(\bar x, \bar y)$,
that is $ \forall x\in X, \forall y\in Y, f(x,\bar y) - \cO(\eps) \le f(\bar x, \bar y) \le f(\bar x,y) + \cO( \eps ) $.
\end{theorem}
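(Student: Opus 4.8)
The plan is to mimic the deterministic Fej\'er-monotone argument behind Theorem~\ref{thm:converge_exact} (the scheme of \cite{kallio1994perturbation}) while absorbing every Monte-Carlo error into $\mathcal{O}(\eps)$ slack terms controlled by concentration. Work with the $l=1$, plain-SGD variant; fix any exact saddle point $(x^*,y^*)$ of $f$ (its existence follows from the classical convex--concave minimax theorem under A\ref{assum:compact}); abbreviate $\hat E_k\defeq\hat f(x^k,v^k)-\hat f(u^k,y^k)\ge 0$ (the quantity appearing in the stated step size); and take the Lyapunov function $\Phi_k\defeq\norm{x^k-x^*}^2+\norm{y^k-y^*}^2$, which obeys $\Phi_0\le 2D^2$ by A\ref{assum:compact}. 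Expanding the updates $x^{k+1}=x^k-\eta_k\hat g_x^k$ and $y^{k+1}=y^k+\eta_k\hat g_y^k$ gives the exact identity
\begin{equation*}
\Phi_{k+1}=\Phi_k-2\eta_k\del{\ip{\hat g_x^k}{x^k-x^*}-\ip{\hat g_y^k}{y^k-y^*}}+\eta_k^2\del{\norm{\hat g_x^k}^2+\norm{\hat g_y^k}^2},
\end{equation*}
so everything reduces to lower-bounding the bracketed bilinear term by roughly $\hat E_k$.

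The first ingredient is that the policy gradients of Eq.~\ref{eq:pg} are \emph{approximate} sub-/super-gradients. By the policy gradient theorem, conditioned on the past (hence on the fixed pair $(x^k,v^k)$), $\hat g_x^k$ is an average of $m_k$ i.i.d.\ vectors, unbiased for $g_x^k\defeq\nabla_x f(x^k,v^k)$, with coordinates of magnitude $\mathcal{O}(RB)$ under A\ref{assum:grad_bound} (the horizon absorbed into the constant). A coordinatewise Hoeffding bound plus a union bound over the $d$ coordinates and the $n^2$ population pairs shows that, once $m_k=\Omega\del{\tfrac{R^2B^2D^2}{\eps^2}\log\tfrac{d}{\delta 2^{-k}}}$, with probability $1-\delta 2^{-k}$ one has $\norm{\hat g_x^k-g_x^k}_\infty\le\eps/(2D)$, likewise for $\hat g_y^k$, and $\abs{\hat f-f}\le\eps/2$ on every evaluated pair; intersecting these over all $k$ (the failure probabilities sum to $\mathcal{O}(\delta)$) gives one good event of probability $1-\mathcal{O}(\delta)$ on which all bounds below hold simultaneously. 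On that event, convexity of $f(\cdot,v^k)$ gives $\ip{g_x^k}{x^k-x^*}\ge f(x^k,v^k)-f(x^*,v^k)$, hence $\ip{\hat g_x^k}{x^k-x^*}\ge f(x^k,v^k)-f(x^*,v^k)-\eps/2$ by H\"older with $\norm{x^k-x^*}_1\le D$, and symmetrically $\ip{\hat g_y^k}{y^*-y^k}\ge f(u^k,y^*)-f(u^k,y^k)-\eps/2$ by concavity of $f(u^k,\cdot)$. Adding these (using $\ip{\hat g_y^k}{y^*-y^k}=-\ip{\hat g_y^k}{y^k-y^*}$), invoking the saddle inequalities $f(x^*,v^k)\le f(x^*,y^*)\le f(u^k,y^*)$, and then $\abs{\hat f-f}\le\eps/2$ on the two pairs, yields
\begin{equation*}
\ip{\hat g_x^k}{x^k-x^*}-\ip{\hat g_y^k}{y^k-y^*}\;\ge\;f(x^k,v^k)-f(u^k,y^k)-\eps\;\ge\;\hat E_k-2\eps .
\end{equation*}
Substituting this and the stated step size $\eta_k=\alpha(\hat E_k-2\eps)/(\norm{\hat g_x^k}^2+\norm{\hat g_y^k}^2)$ into the identity gives the Polyak-type contraction
\begin{equation*}
\Phi_{k+1}\le\Phi_k-\alpha(2-\alpha)\frac{(\hat E_k-2\eps)_+^2}{\norm{\hat g_x^k}^2+\norm{\hat g_y^k}^2},
\end{equation*}
valid for every $0<\alpha<2$ and strict while the stop test $\hat E_k\lesssim\eps$ has not fired.

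Finally, since $\norm{\hat g_x^k}^2+\norm{\hat g_y^k}^2=\mathcal{O}(dR^2B^2)$ is uniformly bounded under A\ref{assum:grad_bound}, telescoping the contraction gives $\sum_k(\hat E_k-2\eps)_+^2<\infty$. If the stop threshold is taken just above $2\eps$, the test must therefore fire at some finite $K$ — otherwise $\hat E_k-2\eps$ would stay bounded away from $0$, contradicting summability — and at that $K$ we have $\hat E_K=\mathcal{O}(\eps)$, hence $E_K\defeq f(x^K,v^K)-f(u^K,y^K)=\mathcal{O}(\eps)$ by concentration; since $E_K$ equals, up to $\mathcal{O}(\eps)$, the duality gap $\max_{y\in C_y^K}f(x^K,y)-\min_{x\in C_x^K}f(x,y^K)$ of $(x^K,y^K)$ against the population, an $\eps$-relaxed pointwise analog of the regularity hypothesis of Theorem~\ref{thm:converge_exact} certifies $(x^K,y^K)$ — equivalently the sequence padded by $(x^K,y^K)$ — as an $\mathcal{O}(\eps)$-approximate equilibrium; if instead one runs the unterminated recursion, monotonicity of $\Phi_k$ makes $\{(x^k,y^k)\}$ Fej\'er-monotone with respect to the nonempty convex compact set of exact saddles (hence bounded, with $\Phi_k$ convergent) and forces $\hat E_k\to 2\eps$, and full-sequence convergence to an $\mathcal{O}(\eps)$-saddle $(\bar x,\bar y)$ follows just as in Theorem~\ref{thm:converge_exact}. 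I expect the conceptual crux to be exactly the passage from ``$E_k$ small'', which is only a gap \emph{against the finite population} $C_x^k,C_y^k$, to ``$(\bar x,\bar y)$ near-optimal against all of $X,Y$'': this genuinely requires the convex--concave structure together with the standing assumption that the perturbation set stays rich in the limit. Everything else — the uniform concentration over the random, adaptively chosen iterates and the bookkeeping of the $\eps$'s against the geometric $\delta 2^{-k}$ budget — is careful but routine.
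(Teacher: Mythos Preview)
Your proposal is correct and follows essentially the same route as the paper's own proof: the same Lyapunov function $\Phi_k=\norm{x^k-x^*}^2+\norm{y^k-y^*}^2$, the same Hoeffding-plus-union-bound concentration to certify the policy-gradient estimates as $\eps$-sub/super-gradients, the same saddle inequalities $f(x^*,v^k)\le f(x^*,y^*)\le f(u^k,y^*)$ to convert the bilinear term into the duality gap, the same Polyak step-size contraction, and the same geometric $\delta 2^{-k}$ budget to union-bound over all iterations. You also correctly flag the one genuine extra hypothesis the argument needs --- that a small gap against the finite candidate sets $C_x^k,C_y^k$ implies a small gap against all of $X\times Y$ --- which the paper indeed adds explicitly as a separate assumption in its appendix (an $\eps$-relaxed version of the regularity condition behind Theorem~\ref{thm:converge_exact}).
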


\textbf{Discussion.} The theorems require $f$ to be convex in $x$ and concave in $y$, but not strictly. This is a weaker assumption than Arrow-Hurwicz-Uzawa's \cite{arrow1958studies}.
The purpose of this simple analysis is mainly a sanity check and an assurance of the correctness of our method. It applies to the setting in Sec.~\ref{sec:exp_matrix} but not beyond, as the assumptions do not necessarily hold for neural networks. The sample size is chosen loosely as we are not aiming at a sharp finite sample complexity or regret analysis. 
In practice, we can find empirically suitable $m_k$ (sample size) and $\eta_k$ (learning rates), and adopt a modern RL algorithm with an advanced optimizer (e.g., PPO \cite{schulman2017proximal} with RmsProp \cite{hinton2012neural}) in place of the SGD updates.

\def\ours{\textsc{Ours}}

\section{Experiments}
\label{sec:exp}

We empirically evaluate our algorithm on several games with distinct characteristics. The implementation is based on PyTorch. Code, details, and demos are in the supplementary material.


\vspace{-5pt}
\paragraph{Compared methods.}
In Matrix games, we compare to a naive mirror descent method, which is essentially Self-play with the latest agent. In the rest of the environments, we compare the results of the following methods:
\begin{enumerate}[topsep=0pt, parsep=0pt]
  \item
  \textbf{Self-play with the latest agent (Arrow-Hurwicz-Uzawa).}
  The learner always competes with the most recent agent. This is essentially the Arrow-Hurwicz-Uzawa method \cite{arrow1958studies} or the naive mirror/alternating descent.
  
  \item
  \textbf{Self-play with the best past agent.}
  The learner competes with the best historical agent maintained. The new agent replaces the best agent if it beats the previous one. This is the scheme in AlphaGo Zero and AlphaZero \cite{silver2017mastering,silver2018general}.
  
  \item
  \textbf{Self-play with a random past agent (Fictitious play).}
  The learner competes against a randomly sampled historical opponent. This is the scheme in OpenAI sumo \cite{bansal2017emergent,al2018continuous}. It is similar to Fictitious play \cite{brown1951iterative} since uniformly random sampling is equivalent to historical average by definition. However, Fictitious play only guarantees convergence of the average-iterate but not the last-iterate agent.
  
  \item
  \textbf{\ours ($n=2,4,6,\ldots$).}
  This is our algorithm with a population of $n$ pairs of agents trained simultaneously, with each other as candidate opponents. Implementation can be distributed.
\end{enumerate}
\vspace{-5pt}

\paragraph{Evaluation protocols.} We mainly measure the strength of agents by the Elo scores \cite{elo1978rating}. Pairwise competition results are gathered from a large tournament among \emph{all} the checkpoint agents of all methods after training. Each competition has multiple matches to account for randomness. The Elo scores are computed by logistic regression, as Elo assumes a logistic relationship 
$
P( \text{A wins} ) + 0.5 P( \text{draw} )
= \nicefrac{1}{ ( 1 + 10^{\frac{R_B - R_A}{400} } ) } .
$
A 100 Elo difference corresponds to roughly 64\% win-rate. The initial agent's Elo is calibrated to 0. Another way to measure the strength is to compute the average rewards (win-rates) against other agents. We also report average rewards in the supplementary.
\vspace{-5pt}

\subsection{Matrix games}
\label{sec:exp_matrix}
\vspace{-5pt}

We verified the last-iterate convergence to Nash equilibrium in several classical two-player zero-sum matrix games. In comparison, the vanilla mirror descent/ascent is known to produce oscillating behaviors \cite{mertikopoulos2018mirror}.
Payoff matrices (for both players separated by comma), phase portraits, and error curves are shown in Tab.~\ref{tab:1},\ref{tab:2},\ref{tab:3},\ref{tab:4} and Fig.~\ref{fig:1},\ref{fig:2},\ref{fig:3},\ref{fig:4}. Our observations are listed beside the figures.

We studied two settings: (1) \ours (Exact Gradient), the full information setting, where the players know the payoff matrix and compute the exact gradients on action probabilities; (2) \ours (Policy Gradient), the reinforcement learning or bandit setting, where each player only receives the reward of its own action.
The action probabilities were modeled by 
a probability vector $p\in \Delta_2$. We estimated the gradient w.r.t $p$ with the REINFORCE estimator \cite{williams1992simple} with sample size $m_k=1024$, and applied $\eta_k = 0.03$ constant learning rate SGD with proximal projection onto $\Delta_2$. We trained $n=4$ agents jointly for Alg.~\ref{algo:n} and separately for the naive mirror descent under the same initialization.

\captionsetup[figure]{font=small}
\begin{figure}[tp]
\centering
\begin{tabular}{ m{7cm} | >{\centering\arraybackslash}m{6cm} }
    \toprule
    \textbf{\hspace{1.8cm} Game payoff matrix} & \textbf{Phase portraits and error curves} \\
    \midrule
        \begin{center}
        \small
        \begin{tabular}{|c|c|c|}
            \hline
               & Heads & Tails  \\
            \hline
            Heads & $1,-1$ & $-1,1$  \\
            \hline
            Tails & $-1,1$ & $1,-1$  \\
            \hline
        \end{tabular}
        \end{center}
        \captionof{table}{Matching Pennies, a classical game where two players simultaneously turn their pennies to heads or tails. If the pennies match, Player 2 wins one penny from 1; otherwise, Player 1 wins. $(P_x(\text{head}), P_y(\text{head})) = \del{\frac12, \frac12}$ is the unique Nash equilibrium with game value 0.}
        \label{tab:1}
    &
        \includegraphics[width=\linewidth]{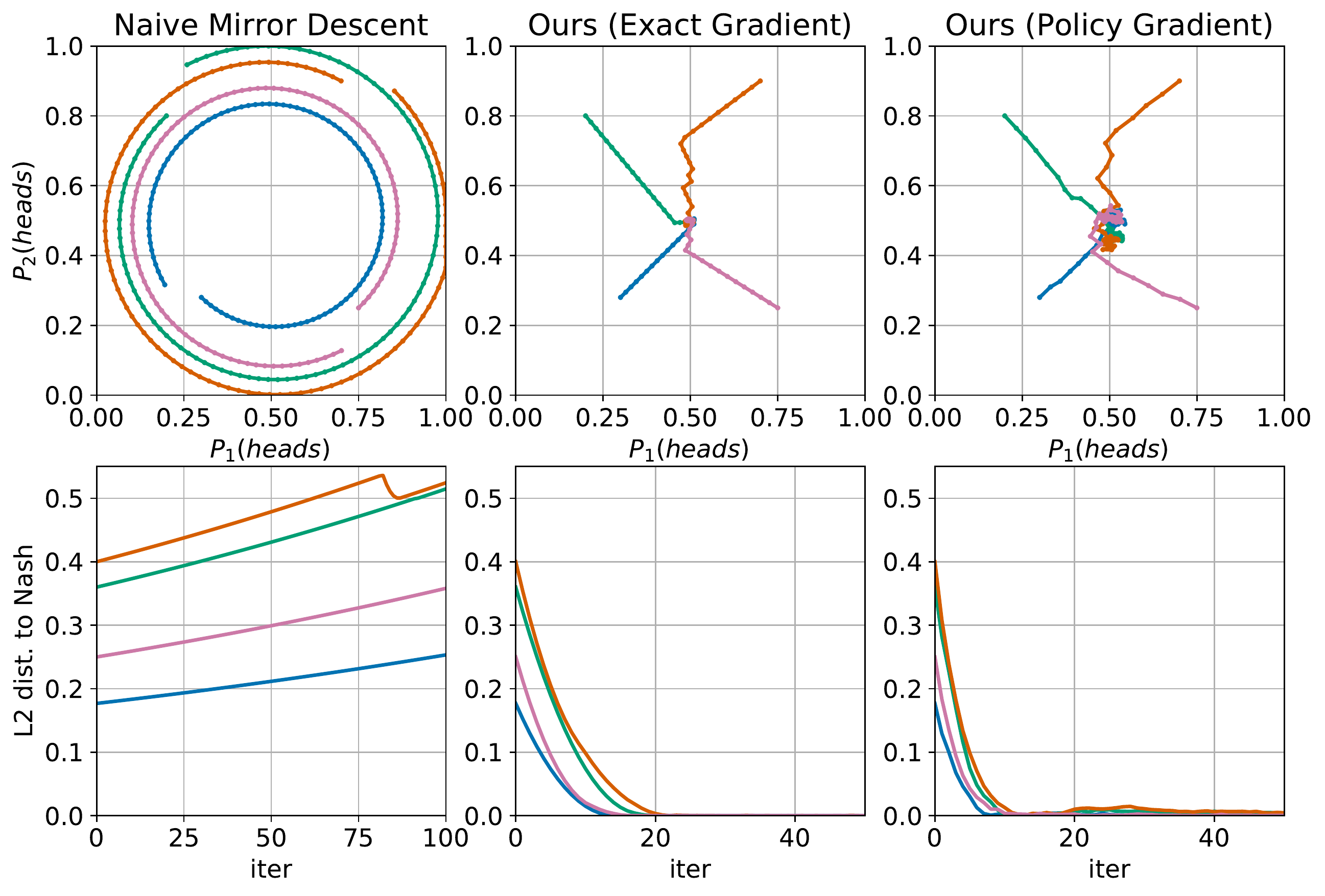}
        \caption{\small Matching Pennies. (Top) The phase portraits. (Bottom) The squared $L_2$ distance to the equilibrium. Four colors correspond to 4 agents in the population.}
        \vspace{-1em}
        \label{fig:1}
    \\
    \midrule
        \begin{center}
        \small
        \begin{tabular}{|c|c|c|}
        \hline
           & Heads & Tails  \\
        \hline
        Heads & $2,-2$ & $0,0$  \\
        \hline
        Tails & $-1,1$ & $2,-2$  \\
        \hline
        \end{tabular}
        \end{center}
        \captionof{table}{Skewed Matching Pennies.}
        \label{tab:2}
    \smallskip
        \textbf{Observation:} In the leftmost column of Fig.~\ref{fig:1},\ref{fig:2}, the naive mirror descent does not converge (pointwisely); instead, it is trapped in a cyclic behavior. The trajectories of the probabilities of playing Heads orbit around the Nash, showing as circles in the phase portrait. On the other hand, our method enjoys approximate last-iterate convergence with both exact and policy gradients.
    &
        \includegraphics[width=\linewidth]{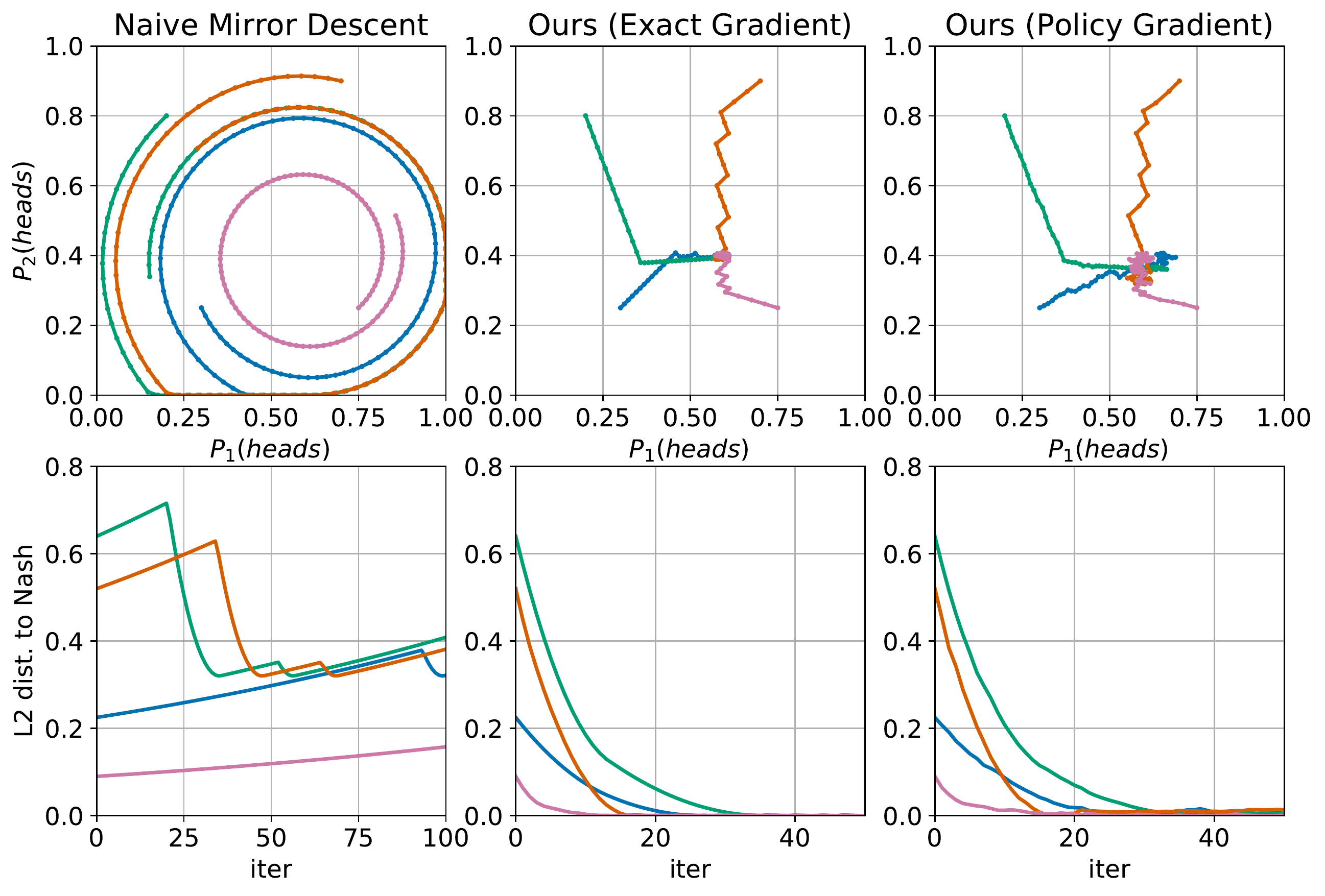}
        \caption{\small Skewed Matching Pennies. The unique Nash equ. is $(P_x(\text{heads}),P_y(\text{heads})) = \del{\frac35,\frac25}$ with value 0.8.}
        \vspace{-1em}
        \label{fig:2}
    \\
    \midrule
        \begin{center}
        \small
        \begin{tabular}{|c|c|c|c|}
        \hline
          & Rock & Paper & Scissors  \\
        \hline
        Rock & $0,0$ & $-1,1$ & $1,-1$  \\
        \hline
        Paper & $1,-1$ & $0,0$ & $-1,1$ \\
        \hline
        Scissors & $-1,1$ & $1,-1$ & $0,0$ \\
        \hline
        \end{tabular}
        \end{center}
        \captionof{table}{Rock Paper Scissors.}
        \label{tab:3}
    \medskip
        \textbf{Observation:} Similar observations occur in the Rock Paper Scissors (Fig.~\ref{fig:3}). The naive method circles around the corresponding equilibrium points $\del{\frac35, \frac25}$ and $\del{\frac13, \frac13, \frac13}$, while our method converges. 
    &
        \includegraphics[width=\linewidth]{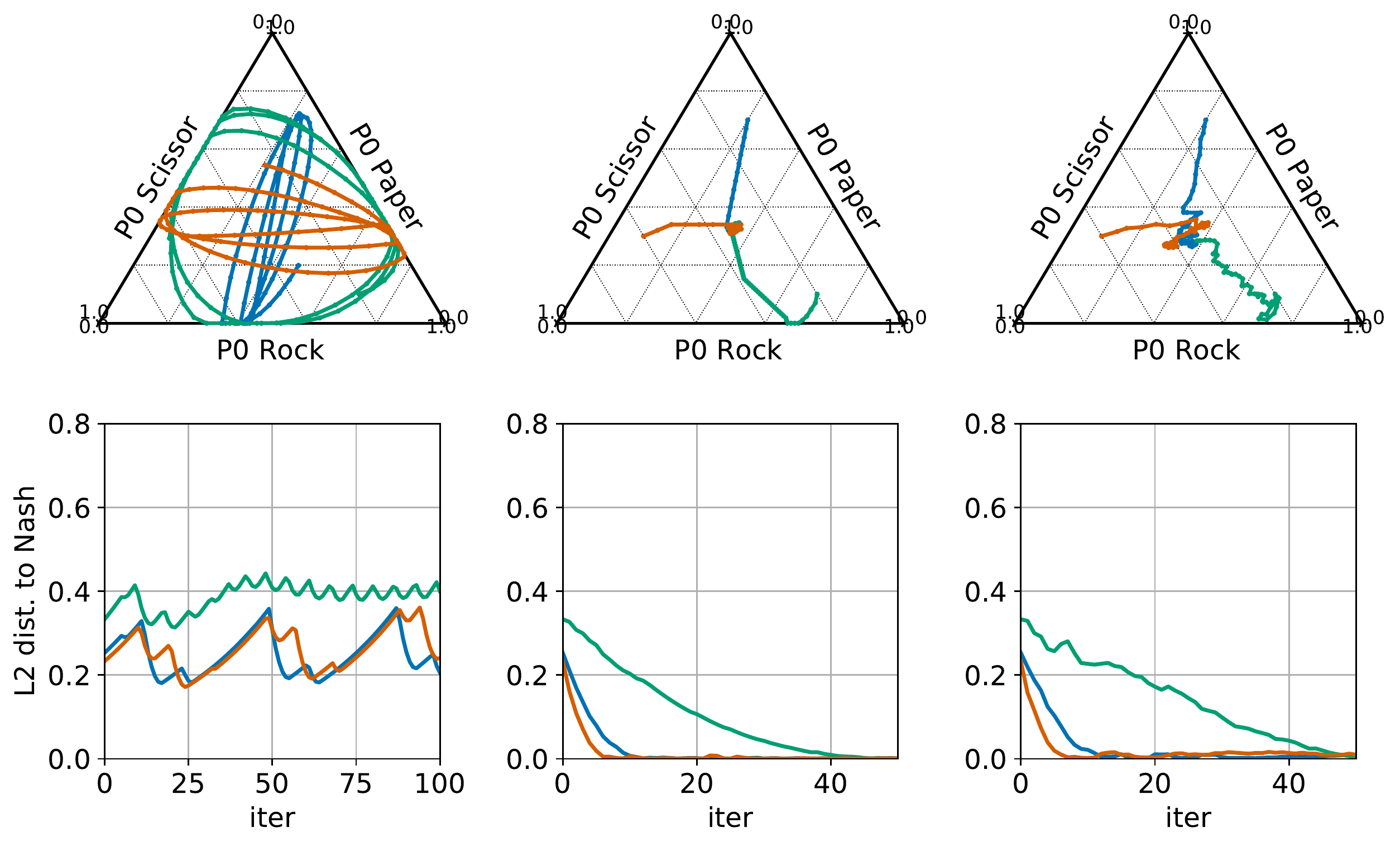}
        \caption{\small Rock Paper Scissors. (Top) Visualization of Player 1's strategies of one agent P0 from the population. (Down) The squared distance to equilibrium.}
        \vspace{-1em}
        \label{fig:3}
    \\
    \midrule
        \begin{center}
        \small
        \begin{tabular}{|c|c|c|c|}
        \hline
          & a & b & c  \\
        \hline
        A & $1,-1$ & $-1,1$ & $0.5,-0.5$  \\
        \hline
        B & $-1,1$ & $1,-1$ & $-0.5,0.5$ \\
        \hline
        \end{tabular}
        \end{center}
        \captionof{table}{Extended Matching Pennies.}
        \label{tab:4}
    \medskip
        \textbf{Observation:} Our method has the benefit of producing diverse solutions when there exist multiple Nash equilibria. 
        The solution for row player is $x = \del{\frac12, \frac12}$, while any interpolation between $\del{\frac12, \frac12, 0}$ and $\del{0, \frac13, \frac23}$ is an equilibrium column strategy. Depending on initialization, agents in our method converges to different equilibria.
    &
        \includegraphics[width=\linewidth]{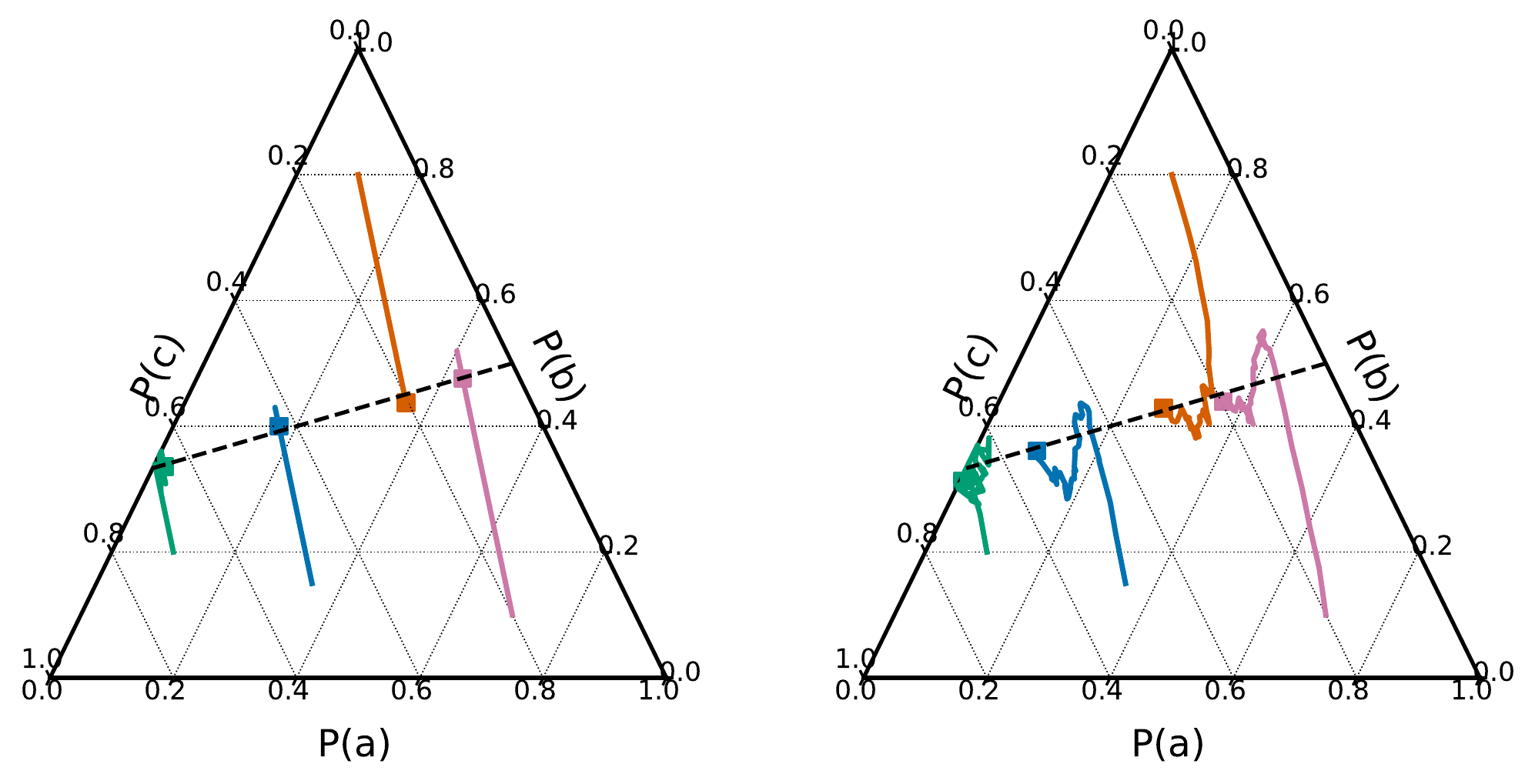}
        \caption{\small Visualization of Player 2's strategies. (Left) Exact gradient; (Right) Policy gradient. The dashed line represents possible equilibrium strategies. The four agents (in different colors) of the population in our algorithm ($n=4$) converge differently.}
        \vspace{-1em}
        \label{fig:4}
    \\
    \bottomrule
\end{tabular}
\end{figure}
\captionsetup[figure]{font=normalsize}

\vspace{-5pt}
\subsection{Grid-world soccer game}
\vspace{-5pt}

We conducted experiments in a grid-world soccer game. Similar games were adopted in \cite{he2016opponent,littman1994markov}.
Two players compete in a $6\times 9$ grid world, starting from random positions. The action space is $\cbr{ \text{up}, \text{down}, \text{left}, \text{right}, \text{noop} }$. Once a player scores a goal, it gets positive reward 1.0, and the game ends. Up to $T=100$ timesteps are allowed. The game ends with a draw if time runs out. The game has imperfect information, as the two players move simultaneously.

The policy and value functions were parameterized by simple one-layer networks, consisting of a one-hot encoding layer and a linear layer that outputs the action logits and values. The logits are transformed into probabilities via $\softmax$. We used Advantage Actor-Critic (A2C) \cite{mnih2016asynchronous} with Generalized Advantage Estimation (GAE) \cite{schulman2015high} and RmsProp \cite{hinton2012neural} as the base RL algorithm. The hyper-parameters are $N=50$, $l=10$, $m_k=32$ for Alg.~\ref{algo:n}.
We kept track of the per-agent number of trajectories (episodes) each algorithm uses for fair comparison. 
Other hyper-parameters are in the supplementary.
All methods were run multiple times to calculate the confidence intervals.

In Fig.~\ref{fig:soccer_curve}, \ours ($n=2,4,6$) all perform better than others, achieving higher Elo scores after experiencing the same number of per-agent episodes. 
Other methods fail to beat the rule-based agent after 32000 episodes. 
Competing with a random past agent learns the slowest, suggesting that, though it may stabilize training and lead to diverse behaviors \cite{bansal2017emergent}, the learning efficiency is not as high because a large portion of samples is devoted to weak opponents. Within our methods, the performance increases with a larger $n$, suggesting a larger population may help find better perturbations.

\captionsetup[figure]{font=small}
\begin{figure}
\centering
\begin{minipage}[t]{.32\linewidth}
  \centering
  \includegraphics[width=\linewidth]{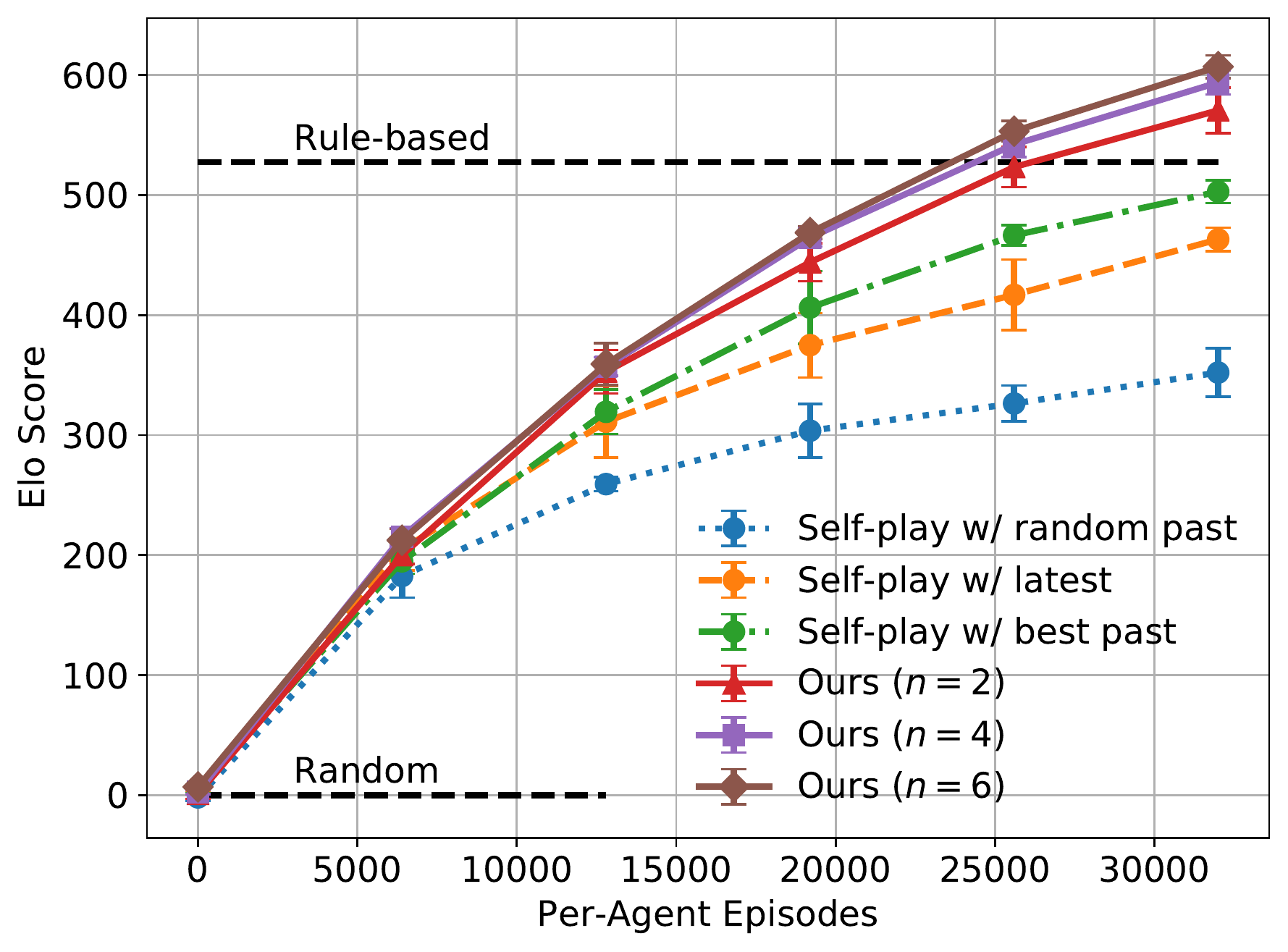}
  \caption{Soccer Elo curves averaged over 3 runs (seeds). For \ours($n$), the average is over $3n$ agents. Horizontal lines show the scores of the rule-based and the random-action agents.}
  \label{fig:soccer_curve}
\end{minipage}
~
\begin{minipage}[t]{.32\linewidth}
  \centering
  \includegraphics[width=\linewidth]{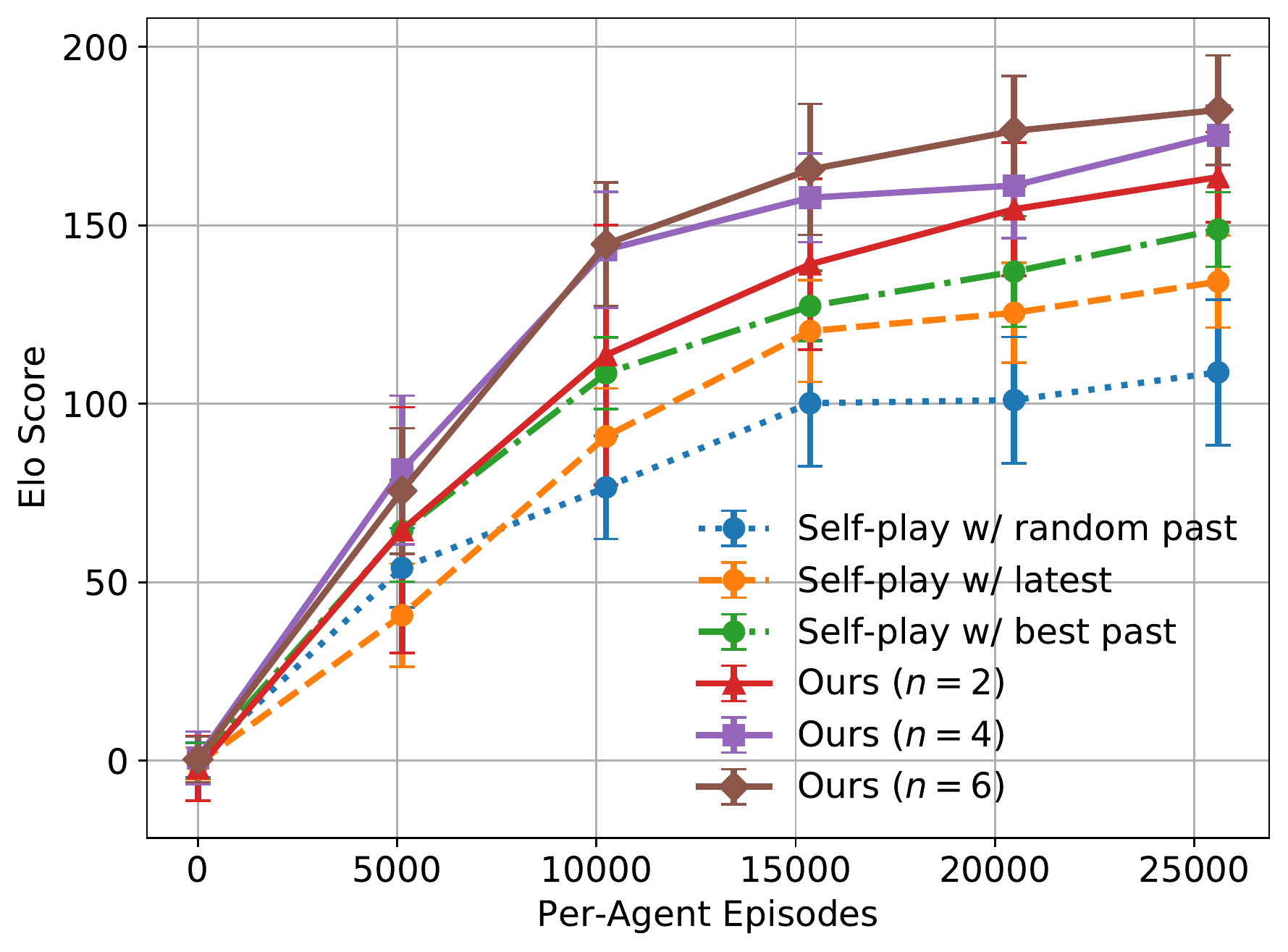}
  \caption{Gomoku Elo curves averaged over 10 runs for the baseline methods, 6 runs (12 agents) for \ours ($n=2$), 4 runs (16 agents) for \ours ($n=4$), and 3 runs (18 agents) for \ours ($n=6$).}
  \label{fig:renju_learn}
\end{minipage}
~
\begin{minipage}[t]{.32\linewidth}
  \centering
  \includegraphics[width=\linewidth]{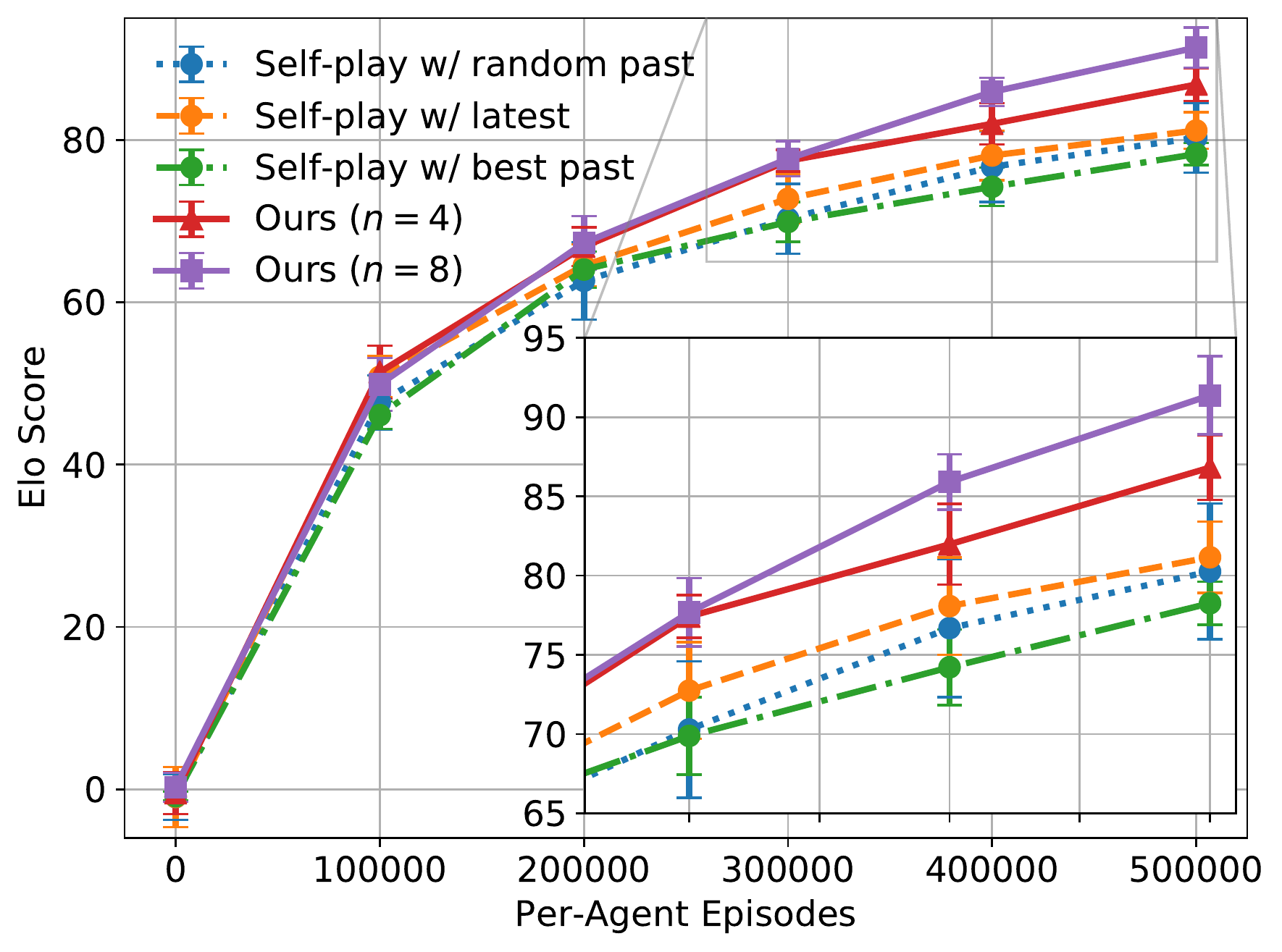}
  \caption{RoboSumo Ants Elo curves averaged over 4 runs for the baseline methods, 2 runs for \ours ($n=4,8$). A close-up is also drawn for better viewing.}
  \label{fig:sumo_learn}
\end{minipage}
\vspace{-5pt}
\caption*{\hfill \footnotesize{* In all three figures, bars show the 95\% confidence intervals. We compare \emph{per-agent} sample efficiency.}}
\vspace{-10pt}
\end{figure}
\captionsetup[figure]{font=normalsize}

\vspace{-5pt}
\subsection{Gomoku board game}
\vspace{-5pt}

We investigated the effectiveness in the Gomoku game, which is also known as Renju, Five-in-a-row. In our variant, two players place black or white stones on a 9-by-9 board in turn. The player who gets an unbroken row of five horizontally, vertically, or diagonally, wins (reward 1). The game is a draw (reward 0) when no valid move remains. The game is sequential and has perfect information.

This experiment involved much more complex neural networks than before. We adopted a 4-layer convolutional ReLU network (kernels $(5,5,3,1)$, channels $(16,32,64,1)$, all strides $1$) for both the policy and value networks. Gomoku is hard to train from scratch with pure model-free RL without explicit tree search. Hence, we pre-trained the policy nets on expert data collected from \url{renjuoffline.com}. We downloaded roughly 130 thousand games and applied behavior cloning. The pre-trained networks were able to predict expert moves with $\approx 41\%$ accuracy and achieve an average score of 0.93 ($96\%$ win and $4\%$ lose) against a random-action player.
We adopted the A2C \cite{mnih2016asynchronous} with GAE \cite{schulman2015high} and RmsProp with learning rate $\eta_k = 0.001$. Up to $N=40$ iterations of Alg.~\ref{algo:n} were run. The other hyperparameters are the same as those in the soccer game. 

In Fig.~\ref{fig:renju_learn}, all methods are able to improve upon the behavior cloning policies significantly. \ours ($n=2,4,6$) demonstrate higher sample efficiency by achieving higher Elo ratings than the alternatives given the same amount of per-agent experience. This again suggests that the opponents are chosen more wisely, resulting in better policy improvements. Lastly, the more complex policy and value functions (multi-layer CNN) do not seem to undermine the advantage of our approach.

\vspace{-5pt}
\subsection{RoboSumo Ants}
\vspace{-5pt}

Our last experiment is based on the RoboSumo simulation environment in \cite{al2018continuous,bansal2017emergent}, where two Ants wrestle in an arena. This setting is particularly relevant to practical robotics research, as we believe success in this simulation could be transferred into the real-world. The Ants move simultaneously, trying to force the opponent out of the arena or onto the floor. The physics simulator is MuJoCo \cite{todorov2012mujoco}. The observation space and action space are continuous.
This game is challenging since it involves a complex continuous control problem with sparse rewards. Following \cite{al2018continuous,bansal2017emergent}, we utilized PPO \cite{schulman2017proximal} with GAE \cite{schulman2015high} as the base RL algorithm, and used a 2-layer fully connected network with width 64 for function approximation.
Hyper-parameters $N=50$, $m_k=500$. 
In \cite{al2018continuous}, a random past opponent is sampled in self-play, corresponding to the ``Self-play w/ random past'' baseline here. The agents are initialized from imitating the pre-trained agents of \cite{al2018continuous}. We consider $n=4$ and $n=8$ in our method.
From Fig.~\ref{fig:sumo_learn}, we observe again that \ours ($n=4,8$) outperform the baseline methods by a statistical margin and that our method benefits from a larger population size.

\vspace{-5pt}

\section{Conclusion}
\vspace{-5pt}

We propose a new algorithmic framework for competitive self-play policy optimization inspired by a perturbation subgradient method for saddle points. Our algorithm provably converges in convex-concave games and achieves better per-agent sample efficiency in several experiments. In the future, we hope to study a larger population size (should we have sufficient computing power) and the possibilities of model-based and off-policy self-play RL under our framework.



\clearpage


\setlength{\bibsep}{3pt}
\bibliography{main}

\begin{thebibliography}{44}
\providecommand{\natexlab}[1]{#1}
\providecommand{\url}[1]{\texttt{#1}}
\expandafter\ifx\csname urlstyle\endcsname\relax
  \providecommand{\doi}[1]{doi: #1}\else
  \providecommand{\doi}{doi: \begingroup \urlstyle{rm}\Url}\fi

\bibitem[Silver et~al.(2016)Silver, Huang, Maddison, Guez, Sifre, Van
  Den~Driessche, Schrittwieser, Antonoglou, Panneershelvam, Lanctot,
  et~al.]{silver2016mastering}
D.~Silver, A.~Huang, C.~J. Maddison, A.~Guez, L.~Sifre, G.~Van Den~Driessche,
  J.~Schrittwieser, I.~Antonoglou, V.~Panneershelvam, M.~Lanctot, et~al.
\newblock Mastering the game of go with deep neural networks and tree search.
\newblock \emph{Nature}, 529\penalty0 (7587):\penalty0 484, 2016.

\bibitem[Silver et~al.(2017)Silver, Schrittwieser, Simonyan, Antonoglou, Huang,
  Guez, Hubert, Baker, Lai, Bolton, et~al.]{silver2017mastering}
D.~Silver, J.~Schrittwieser, K.~Simonyan, I.~Antonoglou, A.~Huang, A.~Guez,
  T.~Hubert, L.~Baker, M.~Lai, A.~Bolton, et~al.
\newblock Mastering the game of go without human knowledge.
\newblock \emph{Nature}, 550\penalty0 (7676):\penalty0 354, 2017.

\bibitem[Silver et~al.(2018)Silver, Hubert, Schrittwieser, Antonoglou, Lai,
  Guez, Lanctot, Sifre, Kumaran, Graepel, et~al.]{silver2018general}
D.~Silver, T.~Hubert, J.~Schrittwieser, I.~Antonoglou, M.~Lai, A.~Guez,
  M.~Lanctot, L.~Sifre, D.~Kumaran, T.~Graepel, et~al.
\newblock A general reinforcement learning algorithm that masters chess, shogi,
  and go through self-play.
\newblock \emph{Science}, 362\penalty0 (6419):\penalty0 1140--1144, 2018.

\bibitem[Baker et~al.(2020)Baker, Kanitscheider, Markov, Wu, Powell, McGrew,
  and Mordatch]{baker2019emergent}
B.~Baker, I.~Kanitscheider, T.~Markov, Y.~Wu, G.~Powell, B.~McGrew, and
  I.~Mordatch.
\newblock Emergent tool use from multi-agent autocurricula.
\newblock In \emph{International Conference on Learning Representations}, 2020.

\bibitem[Bansal et~al.(2017)Bansal, Pachocki, Sidor, Sutskever, and
  Mordatch]{bansal2017emergent}
T.~Bansal, J.~Pachocki, S.~Sidor, I.~Sutskever, and I.~Mordatch.
\newblock Emergent complexity via multi-agent competition.
\newblock \emph{ICLR}, 2017.

\bibitem[Jaderberg et~al.(2019)Jaderberg, Czarnecki, Dunning, Marris, Lever,
  Castaneda, Beattie, Rabinowitz, Morcos, Ruderman, et~al.]{jaderberg2019human}
M.~Jaderberg, W.~Czarnecki, I.~Dunning, L.~Marris, G.~Lever, A.~Castaneda,
  C.~Beattie, N.~Rabinowitz, A.~Morcos, A.~Ruderman, et~al.
\newblock Human-level performance in 3d multiplayer games with population-based
  reinforcement learning.
\newblock \emph{Science}, 364\penalty0 (6443):\penalty0 859--865, 2019.

\bibitem[Berner et~al.(2019)Berner, Brockman, Chan, Cheung, Debiak, Dennison,
  Farhi, Fischer, Hashme, Hesse, et~al.]{berner2019dota}
C.~Berner, G.~Brockman, B.~Chan, V.~Cheung, P.~Debiak, C.~Dennison, D.~Farhi,
  Q.~Fischer, S.~Hashme, C.~Hesse, et~al.
\newblock Dota 2 with large scale deep reinforcement learning.
\newblock \emph{arXiv preprint arXiv:1912.06680}, 2019.

\bibitem[Vinyals et~al.(2019)Vinyals, Babuschkin, Czarnecki, Mathieu, Dudzik,
  Chung, Choi, Powell, Ewalds, Georgiev, et~al.]{vinyals2019grandmaster}
O.~Vinyals, I.~Babuschkin, W.~M. Czarnecki, M.~Mathieu, A.~Dudzik, J.~Chung,
  D.~H. Choi, R.~Powell, T.~Ewalds, P.~Georgiev, et~al.
\newblock Grandmaster level in starcraft ii using multi-agent reinforcement
  learning.
\newblock \emph{Nature}, 575\penalty0 (7782):\penalty0 350--354, 2019.

\bibitem[Brown and Sandholm(2019)]{brown2019superhuman}
N.~Brown and T.~Sandholm.
\newblock Superhuman ai for multiplayer poker.
\newblock \emph{Science}, 365\penalty0 (6456):\penalty0 885--890, 2019.

\bibitem[Al-Shedivat et~al.(2018)Al-Shedivat, Bansal, Burda, Sutskever,
  Mordatch, and Abbeel]{al2018continuous}
M.~Al-Shedivat, T.~Bansal, Y.~Burda, I.~Sutskever, I.~Mordatch, and P.~Abbeel.
\newblock Continuous adaptation via meta-learning in nonstationary and
  competitive environments.
\newblock In \emph{International Conference on Learning Representations}, 2018.

\bibitem[Liu et~al.(2019)Liu, Lever, Merel, Tunyasuvunakool, Heess, and
  Graepel]{liu2019emergent}
S.~Liu, G.~Lever, J.~Merel, S.~Tunyasuvunakool, N.~Heess, and T.~Graepel.
\newblock Emergent coordination through competition.
\newblock \emph{ICLR}, 2019.

\bibitem[Nash(1951)]{nash1951non}
J.~Nash.
\newblock Non-cooperative games.
\newblock \emph{Annals of mathematics}, pages 286--295, 1951.

\bibitem[Arrow et~al.(1958)Arrow, Azawa, Hurwicz, and Uzawa]{arrow1958studies}
K.~J. Arrow, H.~Azawa, L.~Hurwicz, and H.~Uzawa.
\newblock \emph{Studies in linear and non-linear programming}, volume~2.
\newblock Stanford University Press, 1958.

\bibitem[Korpelevich(1976)]{korpelevich1976extragradient}
G.~Korpelevich.
\newblock The extragradient method for finding saddle points and other
  problems.
\newblock \emph{Matecon}, 12:\penalty0 747--756, 1976.

\bibitem[Kallio and Ruszczynski(1994)]{kallio1994perturbation}
M.~Kallio and A.~Ruszczynski.
\newblock Perturbation methods for saddle point computation.
\newblock 1994.

\bibitem[Nedi{\'c} and Ozdaglar(2009)]{nedic2009subgradient}
A.~Nedi{\'c} and A.~Ozdaglar.
\newblock Subgradient methods for saddle-point problems.
\newblock \emph{Journal of optimization theory and applications}, 2009.

\bibitem[Zinkevich et~al.(2008)Zinkevich, Johanson, Bowling, and
  Piccione]{zinkevich2008regret}
M.~Zinkevich, M.~Johanson, M.~Bowling, and C.~Piccione.
\newblock Regret minimization in games with incomplete information.
\newblock In \emph{Advances in neural information processing systems}, pages
  1729--1736, 2008.

\bibitem[Brown(1951)]{brown1951iterative}
G.~W. Brown.
\newblock Iterative solution of games by fictitious play.
\newblock \emph{Activity analysis of production and allocation}, 13\penalty0
  (1):\penalty0 374--376, 1951.

\bibitem[Singh et~al.(2000)Singh, Kearns, and Mansour]{singh2000nash}
S.~Singh, M.~Kearns, and Y.~Mansour.
\newblock Nash convergence of gradient dynamics in general-sum games.
\newblock \emph{UAI}, 2000.

\bibitem[Now{\'e} et~al.(2012)Now{\'e}, Vrancx, and De~Hauwere]{nowe2012game}
A.~Now{\'e}, P.~Vrancx, and Y.-M. De~Hauwere.
\newblock Game theory and multi-agent reinforcement learning.
\newblock \emph{Reinforcement Learning}, page 441, 2012.

\bibitem[Heinrich and Silver(2016)]{heinrich2016deep}
J.~Heinrich and D.~Silver.
\newblock Deep reinforcement learning from self-play in imperfect-information
  games.
\newblock \emph{arXiv:1603.01121}, 2016.

\bibitem[Jafari et~al.(2001)Jafari, Greenwald, Gondek, and Ercal]{jafari2001no}
A.~Jafari, A.~Greenwald, D.~Gondek, and G.~Ercal.
\newblock On no-regret learning, fictitious play, and nash equilibrium.
\newblock \emph{ICML}, 2001.

\bibitem[Sutton and Barto(2018)]{sutton2018reinforcement}
R.~Sutton and A.~Barto.
\newblock \emph{reinforcement learning: an introduction}.
\newblock MIT press, 2018.

\bibitem[Littman(1994)]{littman1994markov}
M.~L. Littman.
\newblock Markov games as a framework for multi-agent reinforcement learning.
\newblock In \emph{Machine Learning}, pages 157--163. Elsevier, 1994.

\bibitem[Tesauro(1995)]{tesauro1995temporal}
G.~Tesauro.
\newblock Temporal difference learning and td-gammon.
\newblock \emph{Communications of the ACM}, 38\penalty0 (3):\penalty0 58--68,
  1995.

\bibitem[Hu and Wellman(2003)]{hu2003nash}
J.~Hu and M.~P. Wellman.
\newblock Nash q-learning for general-sum stochastic games.
\newblock \emph{JMLR}, 4\penalty0 (Nov):\penalty0 1039--1069, 2003.

\bibitem[Bowling and Veloso(2002)]{bowling2002multiagent}
M.~Bowling and M.~Veloso.
\newblock Multiagent learning using a variable learning rate.
\newblock \emph{Artificial Intelligence}, 136\penalty0 (2):\penalty0 215--250,
  2002.

\bibitem[Bai and Jin(2020)]{bai2020provable}
Y.~Bai and C.~Jin.
\newblock Provable self-play algorithms for competitive reinforcement learning.
\newblock \emph{arXiv preprint arXiv:2002.04017}, 2020.

\bibitem[Lanctot et~al.(2017)Lanctot, Zambaldi, Gruslys, Lazaridou, Tuyls,
  P{\'e}rolat, Silver, and Graepel]{lanctot2017unified}
M.~Lanctot, V.~Zambaldi, A.~Gruslys, A.~Lazaridou, K.~Tuyls, J.~P{\'e}rolat,
  D.~Silver, and T.~Graepel.
\newblock A unified game-theoretic approach to multiagent reinforcement
  learning.
\newblock In \emph{Advances in neural information processing systems}, pages
  4190--4203, 2017.

\bibitem[Cardoso et~al.(2019)Cardoso, Abernethy, Wang, and
  Xu]{cardoso2019competing}
A.~R. Cardoso, J.~Abernethy, H.~Wang, and H.~Xu.
\newblock Competing against nash equilibria in adversarially changing zero-sum
  games.
\newblock In \emph{International Conference on Machine Learning}, pages
  921--930, 2019.

\bibitem[Mertikopoulos et~al.(2019)Mertikopoulos, Zenati, Lecouat, Foo,
  Chandrasekhar, and Piliouras]{mertikopoulos2018mirror}
P.~Mertikopoulos, H.~Zenati, B.~Lecouat, C.-S. Foo, V.~Chandrasekhar, and
  G.~Piliouras.
\newblock Optimistic mirror descent in saddle-point problems: Going the extra
  (gradient) mile.
\newblock \emph{ICLR}, 2019.

\bibitem[Rakhlin and Sridharan(2013)]{rakhlin2013optimization}
S.~Rakhlin and K.~Sridharan.
\newblock Optimization, learning, and games with predictable sequences.
\newblock In \emph{Advances in Neural Information Processing Systems}, pages
  3066--3074, 2013.

\bibitem[Hamm and Noh(2018)]{hamm2018k}
J.~Hamm and Y.-K. Noh.
\newblock K-beam minimax: Efficient optimization for deep adversarial learning.
\newblock \emph{ICML}, 2018.

\bibitem[Gleave et~al.(2019)Gleave, Dennis, Wild, Kant, Levine, and
  Russell]{gleave2019adversarial}
A.~Gleave, M.~Dennis, C.~Wild, N.~Kant, S.~Levine, and S.~Russell.
\newblock Adversarial policies: Attacking deep reinforcement learning.
\newblock In \emph{International Conference on Learning Representations}, 2019.

\bibitem[Grill et~al.(2020)Grill, Altch{\'e}, Tang, Hubert, Valko, Antonoglou,
  and Munos]{grill2020monte}
J.-B. Grill, F.~Altch{\'e}, Y.~Tang, T.~Hubert, M.~Valko, I.~Antonoglou, and
  R.~Munos.
\newblock Monte-carlo tree search as regularized policy optimization.
\newblock \emph{ICML}, 2020.

\bibitem[Shapley(1953)]{shapley1953stochastic}
L.~S. Shapley.
\newblock Stochastic games.
\newblock \emph{PNAS}, 39\penalty0 (10):\penalty0 1095--1100, 1953.

\bibitem[Schulman et~al.(2017)Schulman, Wolski, Dhariwal, Radford, and
  Klimov]{schulman2017proximal}
J.~Schulman, F.~Wolski, P.~Dhariwal, A.~Radford, and O.~Klimov.
\newblock Proximal policy optimization algorithms.
\newblock \emph{arXiv preprint arXiv:1707.06347}, 2017.

\bibitem[Hinton et~al.()Hinton, Srivastava, and Swersky]{hinton2012neural}
G.~Hinton, N.~Srivastava, and K.~Swersky.
\newblock Neural networks for machine learning lecture 6a overview of
  mini-batch gradient descent.

\bibitem[Elo(1978)]{elo1978rating}
A.~E. Elo.
\newblock \emph{The rating of chessplayers, past and present}.
\newblock Arco Pub., 1978.

\bibitem[Williams(1992)]{williams1992simple}
R.~J. Williams.
\newblock Simple statistical gradient-following algorithms for connectionist
  reinforcement learning.
\newblock \emph{Machine learning}, 8\penalty0 (3-4):\penalty0 229--256, 1992.

\bibitem[He et~al.(2016)He, Boyd-Graber, Kwok, and
  Daum{\'e}~III]{he2016opponent}
H.~He, J.~Boyd-Graber, K.~Kwok, and H.~Daum{\'e}~III.
\newblock Opponent modeling in deep reinforcement learning.
\newblock \emph{ICML}, 2016.

\bibitem[Mnih et~al.(2016)Mnih, Badia, Mirza, Graves, Lillicrap, Harley,
  Silver, and Kavukcuoglu]{mnih2016asynchronous}
V.~Mnih, A.~P. Badia, M.~Mirza, A.~Graves, T.~Lillicrap, T.~Harley, D.~Silver,
  and K.~Kavukcuoglu.
\newblock Asynchronous methods for deep reinforcement learning.
\newblock In \emph{ICML}, pages 1928--1937, 2016.

\bibitem[Schulman et~al.(2016)Schulman, Moritz, Levine,
  et~al.]{schulman2015high}
J.~Schulman, P.~Moritz, S.~Levine, et~al.
\newblock High-dimensional continuous control using generalized advantage
  estimation.
\newblock \emph{ICLR}, 2016.

\bibitem[Todorov et~al.(2012)Todorov, Erez, and Tassa]{todorov2012mujoco}
E.~Todorov, T.~Erez, and Y.~Tassa.
\newblock Mujoco: A physics engine for model-based control.
\newblock In \emph{2012 IEEE/RSJ International Conference on Intelligent Robots
  and Systems}, pages 5026--5033. IEEE, 2012.

\end{thebibliography}

\clearpage
\appendix
\section{Experiment details}

\subsection{Illustrations of the games in the experiments}

\begin{figure*}[h!]
\centering
\begin{tabular}{ m{3cm}  >{\centering}m{6cm}  >{\centering\arraybackslash}m{3.6cm} }
    \toprule
    \textbf{Illustration}  &   & \textbf{Properties} \\
    \midrule
        \includegraphics[width=\linewidth]{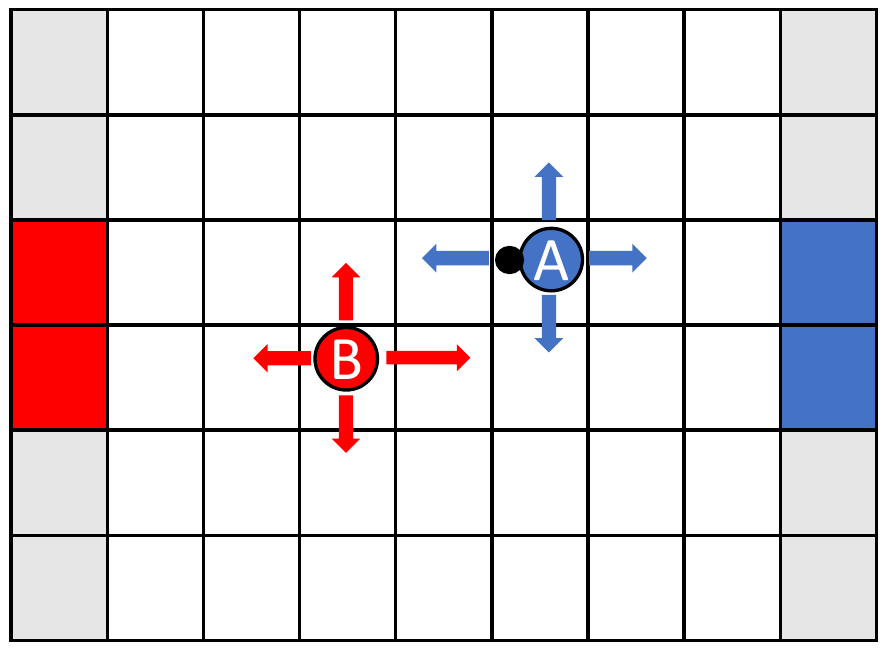}
        &
        \caption{Illustration of the 6x9 grid-world soccer game. Red and blue represent the two teams A and B. At start, the players are initialized to random positions on respective sides, and the ball is randomly assigned to one team. Players move up, down, left and right. Once a player scores a goal, the corresponding team wins and the game ends. One player can intercept the other's ball by crossing the other player.}
    &
        \begin{tabular}[b]{m{3.5cm}}
            \textbf{Observation}: \\
            Tensor of shape $[5,]$, ($x_A$, $y_A$, $x_B$, $y_B$, A has ball) \\
            \textbf{Action}: \\
            $\cbr{ \text{up}, \text{down}, \text{left}, \text{right}, \text{noop} }$ \\
            \textbf{Time limit}: \\
            50 moves \\
            \textbf{Terminal reward}: \\
            $+1$ for winning team \\ $-1$ for losing team  \\ $0$ if timeout
        \end{tabular}
    \\
    \midrule
        \includegraphics[width=\linewidth]{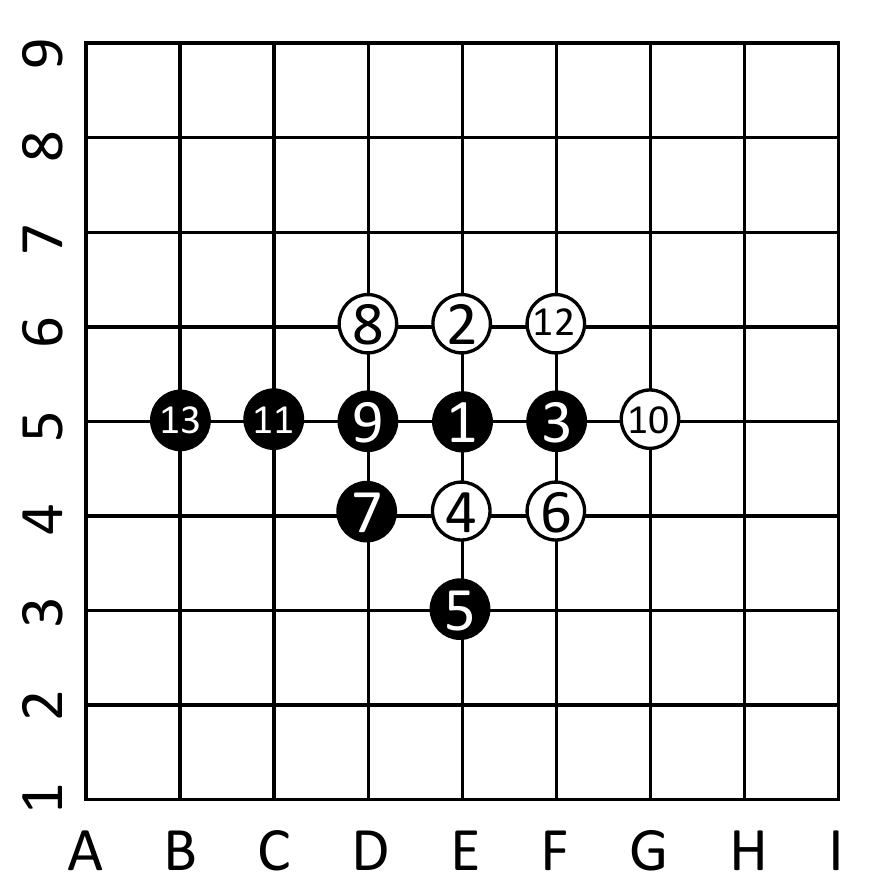}
        &
        \caption{Illustration of the Gomoku game (also known as Renju, five-in-a-row). We study the 9x9 board variant. Two players sequentially place black and white stones on the board. Black goes first. A player wins when he or she gets five stones in a row. In the case of this illustration, the black wins because there is five consecutive black stones in the 5th row. Numbers in the stones indicate the ordered they are placed.}
    &
        \begin{tabular}[b]{m{3.5cm}}
            \textbf{Observation}: \\
            Tensor of shape $[9,9,3]$, last dim 0: vacant, 1: black, 2: white  \\
            \textbf{Action}: \\
            Any valid location on the 9x9 board\\
            \textbf{Time limit}: \\
            41 moves per-player \\
            \textbf{Terminal reward}: \\
            $+1$ for winning player \\ $-1$ for losing player \\ $0$ if timeout
        \end{tabular}
    \\
    \midrule
        \includegraphics[width=\linewidth]{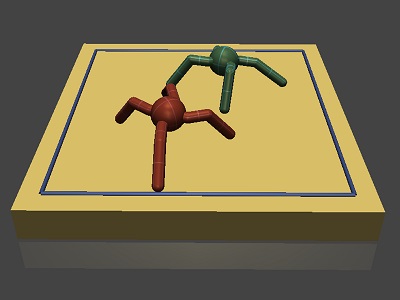}
        &
        \caption{Illustration of the RoboSumo Ants game. Two ants fight in the arena. The goal is to push the opponent out of the arena or down to the floor. Agent positions are initialized to be random at the start of the game. The game ends in a draw if the time limit is reached. In addition to the terminal $\pm 1$ reward, the environment comes with shaping rewards (motion bonus, closeness to opponent, etc.). In order to make the game zero-sum, we take the difference between the original rewards of the two ants.}
    &
        \begin{tabular}[b]{m{3.5cm}}
            \textbf{Observation}: \\
            $R^{120}$ \\
            \textbf{Action}: \\
            $R^{8}$ \\
            \textbf{Time limit}: \\
            100 moves \\
            \textbf{Reward}: \\
            $r_t=r_t^{\text{orig y}} - r_t^{\text{orig x}}$ \\
            Terminal $\pm1$ or $0$.
        \end{tabular}
    \\
    \bottomrule
\end{tabular}
\end{figure*}

\subsection{Hyper-parameters}
The hyper-parameters in different games are listed in Tab.~\ref{tab:hyperparam}.

\def\ind{\\ \makebox[1em]{}}
\begin{table}[htp]
    \caption{Hyper-parameters.}
    \centering
    \begin{tabular}{llll}
        \toprule
        \bfseries Hyper-param \textbackslash{} Game & \bfseries Soccer & \bfseries Gomoku & \bfseries RoboSumo \\
        \midrule
        Num. of iterations $N$    & 50    & 40    & 50    \\
        Learning rate $\eta_k$      & 0.1  & \parbox{8em}{0 $\rightarrow$ 0.001 in first 20 steps then 0.001}  & 3e-5 $\rightarrow$ 0 linearly  \\
        Value func learning rate    & (Same as above.) & (Same as above.) & 9e-5  \\
        Sample size $m_k$           & 32    & 32    & 500   \\
        Num. of inner updates $l$ & 10    & 10    & 10    \\
        Env. time limit             & 50    & 41 per-player  & 100   \\
        \midrule
        Base RL algorithm           & A2C   & A2C   & \parbox{8em}{PPO, clip 0.2, minibatch 512, epochs 3}   \\
        Optimizer                   & RmsProp, $\alpha=0.99$ & RmsProp, $\alpha=0.99$ & RmsProp, $\alpha=0.99$ \\
        Max gradient norm           & 1.0   & 1.0   & 0.1   \\
        GAE $\lambda$ parameter     & 0.95  & 0.95  & 0.98  \\
        Discounting factor $\gamma$ & 0.97  & 0.97  & 0.995 \\
        Entropy bonus coef.         & 0.01  & 0.01  & 0     \\
        \midrule
        Policy function             & 
            \parbox{8em}{Sequential[\ind OneHot[5832],\ind Linear[5832,5],\ind Softmax,\ind CategoricalDist ] }   &
            \parbox{8em}{Sequential[\ind Conv[c16,k5,p2],\ind ReLU,\ind Conv[c32,k5,p2],\ind ReLU,\ind Conv[c64,k3,p1],\ind ReLU,\ind Conv[c1,k1],\ind Spatial Softmax,\ind CategoricalDist ]}   &
            \parbox{9em}{Sequential[\ind Linear[120,64],\ind TanH,\ind Linear[64,64],\ind TanH,\ind Linear[64,8],\ind TanH,\ind GaussianDist ] \\
            Tanh ensures the mean of the Gaussian is between -1 and 1. The density is corrected.}\vspace{5pt} \\
        \midrule
        Value function              & 
        \parbox{8em}{Sequential[\ind OneHot[5832],\ind Linear[5832,1] ] }   &
        \parbox{8em}{Share 3 Conv layers with the policy, but additional heads: global average and Linear[64,1] }   &
        \parbox{8em}{Sequential[\ind Linear[120,64],\ind TanH,\ind Linear[64,64],\ind TanH,\ind Linear[64,1] ]}
        \\
        \bottomrule
    \end{tabular}
    \label{tab:hyperparam}
\end{table}

\subsection{Additional results}
\paragraph{Win-rates (or average rewards).} Here we report additional results in terms of the average win-rates, or equivalently the average rewards through the linear transform $\text{win-rate} = 0.5 + 0.5 \text{ reward}$, in Tab.~\ref{tab:winrate} and \ref{tab:winrate_oneside}. 
Since we treat each $(x_i, y_i)$ pair as one agent, the values are the average of $f(x_i,\cdot)$ and $f(\cdot,y_i)$ in the first table. The one-side $f(\cdot, y_i)$ win-rates are in the second table.
Mean and 95\% confidence intervals are estimated from multiple runs. Exact numbers of runs are in the captions of Fig.~\ref{fig:soccer_curve},\ref{fig:renju_learn},\ref{fig:sumo_learn} of the main paper.
The message is the same as that suggested by the Elo scores: Our method consistently produces stronger agents. We hope the win-rates may give better intuition about the relative performance of different methods.

\begin{table}[htbp]
\caption{Average win-rates ($\in [0,1]$) between the last-iterate (final) agents trained by different algorithms. Last two rows further show the average over other last-iterate agents and all other agents (historical checkpoint) included in the tournament, respectively. Since an agent consists of an $(x,y)$ pair, the win-rate is averaged on $x$ and $y$, i.e., $\text{win}(\text{col vs row}) = \frac{ f(x^{\text{row}}, y^{\text{col}}) - f(x^{\text{col}}, y^{\text{row}}) }{2} \times 0.5 + 0.5$. The lower the better within each column; The higher the better within each row.}
\centering
\scriptsize
\begin{tabular}{c}
    (a) Soccer
    \\
    \setlength{\tabcolsep}{4pt}
    \begin{tabular}{@{}|l|llllll|l}
        \hline
          Soccer & Self-play latest & Self-play best & Self-play rand & Ours (n=2) & Ours (n=4) & Ours (n=6)  \\
        \hline
        Self-play latest & - & $0.533 \pm 0.044$ & $0.382 \pm 0.082$ & $0.662 \pm 0.054$ & $0.691 \pm 0.029$ & $0.713 \pm 0.032$ \\
        Self-play best & $0.467 \pm 0.044$ & - & $0.293 \pm 0.059$ & $0.582 \pm 0.042$ & $0.618 \pm 0.031$ & $0.661 \pm 0.030$ \\
        Self-play rand & $0.618 \pm 0.082$ & $0.707 \pm 0.059$ & - & $0.808 \pm 0.039$ & $0.838 \pm 0.028$ & $0.844 \pm 0.043$ \\
        Ours (n=2) & $0.338 \pm 0.054$ & $0.418 \pm 0.042$ & $0.192 \pm 0.039$ & - & $0.549 \pm 0.022$ & $0.535 \pm 0.022$ \\
        Ours (n=4) & $0.309 \pm 0.029$ & $0.382 \pm 0.031$ & $0.162 \pm 0.028$ & $0.451 \pm 0.022$ & - & $0.495 \pm 0.023$ \\
        Ours (n=6) & $0.287 \pm 0.032$ & $0.339 \pm 0.030$ & $0.156 \pm 0.043$ & $0.465 \pm 0.022$ & $0.505 \pm 0.023$ & - \\
        \hline
        Last-iter average & $0.357 \pm 0.028$ & $0.428 \pm 0.028$ & $0.202 \pm 0.023$ & $0.532 \pm 0.023$ & $0.608 \pm 0.018$ & $0.585 \pm 0.022$ \\
        Overall average & $0.632 \pm 0.017$ & $0.676 \pm 0.014$ & $0.506 \pm 0.020$ & $0.749 \pm 0.009$ & $0.775 \pm 0.006$ & $0.776 \pm 0.008$ \\
        \hline
    \end{tabular}
    \vspace{1em}
    \\
    (b) Gomoku
    \\
    \setlength{\tabcolsep}{4pt}
    \begin{tabular}{|l|llllll|}
        \hline
        Gomoku & Self-play latest & Self-play best & Self-play rand & Ours (n=2) & Ours (n=4) & Ours (n=6) \\
        \hline
        Self-play latest & - & $0.523 \pm 0.026$ & $0.462 \pm 0.032$ & $0.551 \pm 0.024$ & $0.571 \pm 0.018$ & $0.576 \pm 0.017$ \\
        Self-play best & $0.477 \pm 0.026$ & - & $0.433 \pm 0.031$ & $0.532 \pm 0.024$ & $0.551 \pm 0.018$ & $0.560 \pm 0.020$ \\
        Self-play rand & $0.538 \pm 0.032$ & $0.567 \pm 0.031$ & - & $0.599 \pm 0.027$ & $0.588 \pm 0.022$ & $0.638 \pm 0.020$ \\
        Ours (n=2) & $0.449 \pm 0.024$ & $0.468 \pm 0.024$ & $0.401 \pm 0.027$ & - & $0.528 \pm 0.015$ & $0.545 \pm 0.017$ \\
        Ours (n=4) & $0.429 \pm 0.018$ & $0.449 \pm 0.018$ & $0.412 \pm 0.022$ & $0.472 \pm 0.015$ & - & $0.512 \pm 0.013$ \\
        Ours (n=6) & $0.424 \pm 0.017$ & $0.440 \pm 0.020$ & $0.362 \pm 0.020$ & $0.455 \pm 0.017$ & $0.488 \pm 0.013$ & - \\
        \hline
        Last-iter average & $0.455 \pm 0.010$ & $0.479 \pm 0.011$ & $0.407 \pm 0.012$ & $0.509 \pm 0.010$ & $0.537 \pm 0.008$ & $0.560 \pm 0.008$ \\
        Overall average & $0.541 \pm 0.004$ & $0.561 \pm 0.004$ & $0.499 \pm 0.005$ & $0.583 \pm 0.004$ & $0.599 \pm 0.003$ & $0.615 \pm 0.003$ \\
        \hline
    \end{tabular}
    \vspace{1em}
    \\
    (c) RoboSumo
    \\
    \setlength{\tabcolsep}{4pt}
    \begin{tabular}{|l|lllll|}
        \hline
          RoboSumo & Self-play latest & Self-play best & Self-play rand & Ours (n=4) & Ours (n=8) \\
        \hline
        Self-play latest & - & $0.502 \pm 0.012$ & $0.493 \pm 0.013$ & $0.511 \pm 0.011$ & $0.510 \pm 0.010$ \\
        Self-play best & $0.498 \pm 0.012$ & - & $0.506 \pm 0.014$ & $0.514 \pm 0.008$ & $0.512 \pm 0.010$ \\
        Self-play rand & $0.507 \pm 0.013$ & $0.494 \pm 0.014$ & - & $0.508 \pm 0.011$ & $0.515 \pm 0.011$ \\
        Ours (n=4) & $0.489 \pm 0.011$ & $0.486 \pm 0.008$ & $0.492 \pm 0.011$ & - & $0.516 \pm 0.008$ \\
        Ours (n=8) & $0.490 \pm 0.010$ & $0.488 \pm 0.010$ & $0.485 \pm 0.011$ & $0.484 \pm 0.008$ & - \\
        \hline
        Last-iter average & $0.494 \pm 0.006$ & $0.491 \pm 0.005$ & $0.492 \pm 0.006$ & $0.500 \pm 0.005$ & $0.514 \pm 0.005$ \\
        Overall average & $0.531 \pm 0.004$ & $0.527 \pm 0.004$ & $0.530 \pm 0.004$ & $0.539 \pm 0.003$ & $0.545 \pm 0.003$ \\
        \hline
    \end{tabular}
    \vspace{1em}
\end{tabular}
\label{tab:winrate}
\end{table}

\begin{table}[htbp]
\caption{Average \emph{one-sided} win-rates ($\in [0,1]$) between the last-iterate (final) agents trained by different algorithms. The win-rate is one-sided, i.e., $\text{win}(y^{\text{col}} \text{ vs } x^{\text{row}}) = f(x^{\text{row}}, y^{\text{col}}) \times 0.5 + 0.5$. The lower the better within each column; The higher the better within each row.}
\centering
\scriptsize
\begin{tabular}{c}
    (a) Soccer
    \\
    \setlength{\tabcolsep}{4pt}
    \begin{tabular}{@{}|l|llllll|l}
        \hline
        row $x$ \textbackslash{} col $y$ & Self-play latest & Self-play best & Self-play rand & Ours (n=2) & Ours (n=4) & Ours (n=6)  \\
        \hline
        Self-play latest & $0.536 \pm 0.054$ & $0.564 \pm 0.079$ & $0.378 \pm 0.103$ & $0.674 \pm 0.080$ & $0.728 \pm 0.039$ & $0.733 \pm 0.048$ \\
        Self-play best & $0.497 \pm 0.065$ & $0.450 \pm 0.064$ & $0.306 \pm 0.106$ & $0.583 \pm 0.056$ & $0.601 \pm 0.039$ & $0.642 \pm 0.050$ \\
        Self-play rand & $0.614 \pm 0.163$ & $0.719 \pm 0.090$ & $0.481 \pm 0.102$ & $0.796 \pm 0.071$ & $0.816 \pm 0.039$ & $0.824 \pm 0.062$ \\
        Ours (n=2) & $0.350 \pm 0.051$ & $0.419 \pm 0.057$ & $0.181 \pm 0.049$ & $0.451 \pm 0.037$ & $0.525 \pm 0.031$ & $0.553 \pm 0.034$ \\
        Ours (n=4) & $0.346 \pm 0.046$ & $0.365 \pm 0.047$ & $0.140 \pm 0.034$ & $0.427 \pm 0.034$ & $0.491 \pm 0.020$ & $0.494 \pm 0.033$ \\
        Ours (n=6) & $0.308 \pm 0.042$ & $0.319 \pm 0.052$ & $0.136 \pm 0.050$ & $0.483 \pm 0.043$ & $0.505 \pm 0.030$ & $0.515 \pm 0.032$ \\
        \hline
        Last-iter average & $0.381 \pm 0.033$ & $0.422 \pm 0.036$ & $0.188 \pm 0.028$ & $0.525 \pm 0.029$ & $0.601 \pm 0.021$ & $0.587 \pm 0.026$ \\
        Overall average & $0.654 \pm 0.017$ & $0.665 \pm 0.016$ & $0.502 \pm 0.021$ & $0.745 \pm 0.010$ & $0.771 \pm 0.006$ & $0.775 \pm 0.009$ \\
        \hline
    \end{tabular}
    \vspace{1em}
    \\
    (b) Gomoku
    \\
    \setlength{\tabcolsep}{4pt}
    \begin{tabular}{|l|llllll|}
        \hline
        row $x$ \textbackslash{} col $y$ & Self-play latest & Self-play best & Self-play rand & Ours (n=2) & Ours (n=4) & Ours (n=6) \\
        \hline
        Self-play latest & $0.481 \pm 0.031$ & $0.540 \pm 0.038$ & $0.488 \pm 0.050$ & $0.594 \pm 0.041$ & $0.571 \pm 0.026$ & $0.586 \pm 0.030$ \\
        Self-play best & $0.494 \pm 0.033$ & $0.531 \pm 0.030$ & $0.471 \pm 0.049$ & $0.597 \pm 0.040$ & $0.562 \pm 0.024$ & $0.572 \pm 0.028$ \\
        Self-play rand & $0.565 \pm 0.036$ & $0.605 \pm 0.036$ & $0.572 \pm 0.051$ & $0.668 \pm 0.040$ & $0.617 \pm 0.027$ & $0.647 \pm 0.029$ \\
        Ours (n=2) & $0.491 \pm 0.031$ & $0.533 \pm 0.033$ & $0.470 \pm 0.040$ & $0.568 \pm 0.035$ & $0.571 \pm 0.022$ & $0.552 \pm 0.025$ \\
        Ours (n=4) & $0.428 \pm 0.022$ & $0.461 \pm 0.024$ & $0.440 \pm 0.035$ & $0.515 \pm 0.029$ & $0.491 \pm 0.017$ & $0.503 \pm 0.020$ \\
        Ours (n=6) & $0.435 \pm 0.021$ & $0.453 \pm 0.026$ & $0.370 \pm 0.028$ & $0.462 \pm 0.025$ & $0.479 \pm 0.018$ & $0.467 \pm 0.017$ \\
        \hline
        Last-iter average & $0.472 \pm 0.012$ & $0.506 \pm 0.014$ & $0.438 \pm 0.017$ & $0.549 \pm 0.016$ & $0.550 \pm 0.011$ & $0.564 \pm 0.012$ \\
        Overall average & $0.548 \pm 0.005$ & $0.585 \pm 0.005$ & $0.536 \pm 0.007$ & $0.631 \pm 0.006$ & $0.608 \pm 0.004$ & $0.617 \pm 0.004$ \\
        \hline
    \end{tabular}
    \vspace{1em}
    \\
    (c) RoboSumo
    \\
    \setlength{\tabcolsep}{4pt}
    \begin{tabular}{|l|lllll|}
        \hline
        row $x$ \textbackslash{} col $y$ & Self-play latest & Self-play best & Self-play rand & Ours (n=4) & Ours (n=8) \\
        \hline
        Self-play latest & $0.516 \pm 0.022$ & $0.494 \pm 0.020$ & $0.491 \pm 0.023$ & $0.502 \pm 0.017$ & $0.511 \pm 0.016$ \\
        Self-play best & $0.489 \pm 0.018$ & $0.504 \pm 0.023$ & $0.503 \pm 0.022$ & $0.506 \pm 0.014$ & $0.509 \pm 0.014$ \\
        Self-play rand & $0.505 \pm 0.021$ & $0.491 \pm 0.026$ & $0.494 \pm 0.026$ & $0.518 \pm 0.017$ & $0.516 \pm 0.014$ \\
        Ours (n=4) & $0.480 \pm 0.018$ & $0.479 \pm 0.012$ & $0.502 \pm 0.016$ & $0.496 \pm 0.009$ & $0.517 \pm 0.012$ \\
        Ours (n=8) & $0.491 \pm 0.012$ & $0.484 \pm 0.016$ & $0.485 \pm 0.016$ & $0.486 \pm 0.012$ & $0.491 \pm 0.012$ \\
        \hline
        Last-iter average & $0.489 \pm 0.008$ & $0.485 \pm 0.008$ & $0.495 \pm 0.009$ & $0.500 \pm 0.007$ & $0.514 \pm 0.007$ \\
        Overall average & $0.528 \pm 0.004$ & $0.521 \pm 0.004$ & $0.530 \pm 0.005$ & $0.534 \pm 0.003$ & $0.544 \pm 0.003$ \\
        \hline
    \end{tabular}
    \vspace{1em}
\end{tabular}
\label{tab:winrate_oneside}
\end{table}

\paragraph{Training time.} Thanks to the easiness of parallelization, the proposed algorithm enjoys good scalability. We can either distribute the $n$ agents into $n$ processes to run concurrently, or make the rollouts parallel. Our implementation took the later approach. In the most time-consuming RoboSumo Ants experiment, with 30 Intel Xeon CPUs, the baseline methods took approximately 2.4h, while Ours (n=4) took 10.83h to train ($\times4.5$ times), and Ours (n=8) took 20.75h ($\times 8.6$ times). Note that, Ours (n) trains $n$ agents simultaneously. If we train $n$ agents with the baseline methods by repeating the experiment $n$ times, the time would be $2.4n$ hours, which is comparable to Ours (n).

\paragraph{Chance of selecting the agent itself as opponent.}
One big difference between our method and the compared baselines is the ability to select opponents adversarially from the population. Consider the agent pair $(x_i, y_i)$. When training $x_i$, our method finds the strongest opponent (that incurs the largest loss on $x_i$) from the population, whereas the baselines always choose (possibly past versions of) $y_i$. Since the candidate set contains $y_i$, the ``fall-back'' case is to use $y_i$ as opponent in our method. We report the frequency that $y_i$ is chosen as opponent for $x_i$ (and $x_i$ for $y_i$ likewise). This gives a sense of how often our method falls back to the baseline method. From Tab.~\ref{tab:freq_self}, we can observe that, as $n$ grows larger, the chance of fall-back is decreased. This is understandable since a larger population means larger candidate sets and a larger chance to find good perturbations.




\begin{table}[h]
    \caption{Average frequency of using the agent itself as opponent, in the Soccer and Gomoku experiments. The frequency is calculated by counting over all agents and iterations. The $\pm$ shows the \emph{standard deviations} estimated by 3 runs with different random seeds. }
    \centering
    \begin{tabular}{llll}
    \toprule
    Method        & Ours ($n=2$) & Ours ($n=4$) & Ours ($n=6$)  \\
    \midrule
    Frequency of self (Soccer) & $0.4983 \pm 0.0085 $   &   $0.2533 \pm 0.0072$   &  $0.1650 \pm 0.0082$  \\
    Frequency of self (Gomoku) & $0.5063 \pm 0.0153 $   &   $0.2312 \pm 0.0111$   &  $0.1549 \pm 0.0103$  \\
    \bottomrule
    \end{tabular}
    \label{tab:freq_self}
\end{table}


\section{Proofs}

\setcounter{theorem}{0}
\theoremstyle{definition}
\newtheorem{assump_sec}{Assumption}[section]

We adopt the following variant of Alg.~\ref{algo:n} in our asymptotic convergence analysis. For clarity, we investigate the learning process of one agent in the population and drop the $i$ index. $C_x^k$ and $C_y^k$ are not set simply as the population for the sake of the proof. Alternatively, we pose some assumptions. Setting them to the population as in the main text may approximately satisfy the assumptions.

\begin{algorithm}[h!]
\small
\KwIn{$\eta_k$: learning rates, $m_k$: sample size; }
\KwResult{Pair of policies $(x,y)$; }
Initialize $x^0, y^0$\;
\For{$k = 0,1,2,\ldots \infty$} {
    Construct candidate opponent sets $C_y^k$ and $C_x^k$\;
    Find perturbed $ v^k = \argmax_{y \in C_y^k} \hat{f}(x^k, y) $ and perturbed $ u^k = \argmin_{x \in C_x^k} \hat{f}(x, y^k)$
    where the evaluation is done with Eq.~\ref{eq:f} and sample size $m_k$
    \; 
    Compute estimated duality gap $ \hat{E}_k = \hat{f}(x^k, v^k) - \hat{f}(u^k, y^k) $\;
    \If{$\hat{E}_k \le 3\eps$}{
        \Return $(x^k, y^k)$
    }
        Estimate policy gradients $\hat{g}_x^k = \hat{\nabla}_x f(x^k, v^k)$ and $\hat{g}_y^k = \hat{\nabla}_y f(u^k, y^k)$ w/ Eq.~\ref{eq:pg} and sample size $m_k$\;
        Update policy parameters with $ x^{k+1} \gets x^k - \eta_k \hat{g}_x^k $ and $ y^{k+1} \gets y^k + \eta_k \hat{g}_y^k $\;
}
\caption{Simplified perturbation-based self-play policy optimization of one agent.}
\label{algo:1}
\end{algorithm}

\subsection{Proof of Theorem~\ref{thm:converge_exact}}

We restate the assumptions and the theorem here more clearly for reference.

\begin{assump_sec}
\label{supp/assum:compact}
$X, Y \subseteq \bR^d$ ($d > 1$) are compact sets. As a consequence, there exists $D \ge 1$, s.t.,
$$ \forall x_1, x_2\in X,\;\norm{x_1 - x_2}_{1} \le D \text{\; and \;} \forall y_1, y_2\in Y, \;\norm{y_1 - y_2}_{1} \le D. $$
Further, assume $f\colon X\times Y \mapsto \bR$ is a bounded convex-concave function.
\end{assump_sec}

\begin{assump_sec}
\label{supp/assum:candi}
$C_y^k, C_x^k$ are compact subsets of $X$ and $Y$.
Assume that a sequence $(x^k,y^k) \to (\hat{x}, \hat{y}) \land f(x^k,v^k) - f(u^k,y^k) \to 0$ for some $v^k \in C_y^k$ and $u^k \in C_x^k$ implies $(\hat{x}, \hat{y})$ is a saddle point.
\end{assump_sec}

\begin{theorem}[Convergence with exact gradients \cite{kallio1994perturbation}]
\label{supp/thm:converge_exact}
Under Assump.~\ref{supp/assum:compact},\ref{supp/assum:candi}, let the learning rate satisfies
$$ \eta_k < \frac{E_k}{ \norm{ g_x^k }^2 + \norm{ g_y^k }^2 }, $$
Alg.~\ref{algo:1} (when replacing all estimates with true values) produces a sequence of points $\left\{ (x^k, y^k) \right\}_{k=0}^{\infty}$ convergent to a saddle point.
\end{theorem}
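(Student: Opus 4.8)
This is the classical perturbation-subgradient argument of \cite{kallio1994perturbation}, organized as a Fej\'er-monotonicity (``Opial'') proof. If the run terminates the claim is immediate, so suppose it proceeds for all $k$, in which case the stop criterion never triggers and $E_k\defeq f(x^k,v^k)-f(u^k,y^k)>0$ throughout. Fix an arbitrary saddle point $z^\ast=(x^\ast,y^\ast)$ — one exists since by Assump.~\ref{supp/assum:compact} $f$ is a bounded convex-concave function on the compact set $X\times Y$ — and define the Lyapunov quantity $\Phi_k\defeq\norm{x^k-x^\ast}^2+\norm{y^k-y^\ast}^2$. The whole proof rests on showing $\Phi_k$ is nonincreasing with summable decrements, for every saddle point simultaneously.

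First I would establish the one-step inequality. Expanding $x^{k+1}=x^k-\eta_k g_x^k$ and $y^{k+1}=y^k+\eta_k g_y^k$,
\begin{equation}
\Phi_{k+1}=\Phi_k-2\eta_k\ip{g_x^k}{x^k-x^\ast}+2\eta_k\ip{g_y^k}{y^k-y^\ast}+\eta_k^2\del{\norm{g_x^k}^2+\norm{g_y^k}^2}.
\end{equation}
Convexity of $x\mapsto f(x,v^k)$ with $g_x^k\in\partial_x f(x^k,v^k)$ gives $\ip{g_x^k}{x^k-x^\ast}\ge f(x^k,v^k)-f(x^\ast,v^k)$, and concavity of $y\mapsto f(u^k,y)$ with $g_y^k\in\partial_y f(u^k,y^k)$ gives $\ip{g_y^k}{y^k-y^\ast}\le f(u^k,y^k)-f(u^k,y^\ast)$. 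Substituting these and then using the saddle inequalities $f(x^\ast,v^k)\le f(x^\ast,y^\ast)\le f(u^k,y^\ast)$ to cancel the $f(x^\ast,\cdot)$ and $f(\cdot,y^\ast)$ terms, the two function-value contributions collapse to $-2\eta_k\del{f(x^k,v^k)-f(u^k,y^k)}=-2\eta_k E_k$, so
\begin{equation}
\Phi_{k+1}\le\Phi_k-2\eta_k E_k+\eta_k^2\del{\norm{g_x^k}^2+\norm{g_y^k}^2}.
\end{equation}
Taking $\eta_k$ in the prescribed range, concretely $\eta_k=\alpha E_k/\del{\norm{g_x^k}^2+\norm{g_y^k}^2}$ with $0<\alpha<1$ (which satisfies $\eta_k<E_k/(\norm{g_x^k}^2+\norm{g_y^k}^2)$), the right-hand side equals $\Phi_k-\alpha(2-\alpha)\,E_k^2/\del{\norm{g_x^k}^2+\norm{g_y^k}^2}\le\Phi_k$.

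The remaining steps are then routine. Since $\Phi_k\ge 0$ is nonincreasing it converges, and telescoping the decrements gives $\sum_k E_k^2/\del{\norm{g_x^k}^2+\norm{g_y^k}^2}<\infty$. Because $X,Y$ are compact and the (sub/super-)gradients stay uniformly bounded by some $G$ on the relevant compact sets — this is exactly where Assumption~\ref{assum:grad_bound} enters in the Monte Carlo version — the summable terms dominate $E_k^2/G^2$, so $E_k\to 0$. By compactness of $X\times Y$, any subsequence of $\{(x^k,y^k)\}$ has a sub-subsequence $(x^{k_j},y^{k_j})\to(\hat x,\hat y)$; along it $v^{k_j}\in C_y^{k_j}$, $u^{k_j}\in C_x^{k_j}$ and $f(x^{k_j},v^{k_j})-f(u^{k_j},y^{k_j})=E_{k_j}\to 0$, so Assumption~\ref{supp/assum:candi} forces $(\hat x,\hat y)$ to be a saddle point. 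Hence every accumulation point of the trajectory is a saddle point.

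Finally I would upgrade this to convergence of the whole sequence via Fej\'er monotonicity: the one-step inequality above was derived for an \emph{arbitrary} saddle point, so $\norm{(x^k,y^k)-z^\ast}$ converges for every saddle point $z^\ast$. If $\hat z$ and $\tilde z$ were two distinct accumulation points (both saddle points by the previous step), evaluating the convergent sequences $\norm{(x^k,y^k)-\hat z}$ and $\norm{(x^k,y^k)-\tilde z}$ along subsequences tending to $\hat z$ and to $\tilde z$ forces $\norm{\hat z-\tilde z}=0$, a contradiction; thus the accumulation point is unique, and since the iterates lie in the compact set $X\times Y$ they converge to it — a saddle point. The step I expect to be the real work is the passage $\sum_k E_k^2/\del{\norm{g_x^k}^2+\norm{g_y^k}^2}<\infty\Rightarrow E_k\to 0\Rightarrow$ ``accumulation points are saddle points'': it requires a uniform gradient bound (so the step size is not asymptotically too small) together with the structural hypothesis on the candidate sets in Assumption~\ref{supp/assum:candi}; once those are in hand, the Fej\'er-monotone closing argument is standard.
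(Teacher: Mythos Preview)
Your proof takes essentially the same approach as the paper's: derive the one-step Lyapunov inequality $\Phi_{k+1}\le\Phi_k-2\eta_k E_k+\eta_k^2\bigl(\norm{g_x^k}^2+\norm{g_y^k}^2\bigr)$ from convexity/concavity and the saddle inequalities $f(x^\ast,v^k)\le f(x^\ast,y^\ast)\le f(u^k,y^\ast)$, then use the step-size condition to make $\Phi_k$ monotone and invoke Assumption~\ref{supp/assum:candi}. Your version is in fact more complete than the paper's brief paraphrase of \cite{kallio1994perturbation} --- the paper jumps directly from ``$W_k$ monotone and bounded below'' to ``$E_k\to 0$'' and ``the convergent point is a saddle,'' whereas you spell out the telescoping, the uniform gradient bound, and the Fej\'er-monotonicity argument that upgrades subsequential to full-sequence convergence.
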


Assump.~\ref{supp/assum:compact} is standard, which is true if $f$ is based on a payoff table and $X,Y$ are probability simplex as in matrix games, or if $f$ is quadratic and $X,Y$ are unit-norm vectors. Assump.~\ref{supp/assum:candi} is about the regularity of the candidate opponent sets. This is true if $C_y^k, C_x^k$ are compact and $f(x^k,v^k) - f(u^k,y^k) = 0$ only at a saddle point $(u^k, v^k) \in C_y^k \times C_x^k$. An 
trivial example would be $C_x^k = X, C_y^k = Y$. Another example would be the proximal regions around $x^k, y^k$. In practice, Alg.~\ref{algo:n} constructs the candidate sets from the population which needs to be adequately large and diverse to satisfy Assump.~\ref{supp/assum:candi} approximately.

The proof is due to \cite{kallio1994perturbation}, which we paraphrase here.

\begin{proof}
We shall prove that one iteration of Alg.~\ref{algo:1} decreases the distance between the current $(x^k, y^k)$ and the optimal $(x^*, y^*)$. Expand the squared distance,
\begin{equation}
\norm{ x^{k+1} - x^* }^2 
\le \norm{ x^k + \eta_k g_x^k - x^* }^2
\\
= \norm{ x^k - x^* }^2 + 2\eta_k \ip{ g_x^k }{ x^k - x^* } + \eta_k^2 \norm{ g_x^k }^2 .
\end{equation}

From Assump.~\ref{supp/assum:compact}, convexity of $f(x,y)$ on $x$ gives
\begin{equation}
\ip{ g_x^k }{ x^k - x^* } \ge f(x^k, v^k) - f(x^*, v^k)
\end{equation}
which yields
\begin{equation}
\norm{ x^{k+1} - x^* }^2 
\le \norm{ x^k - x^* }^2 - 2\eta_k (f(x^k, v^k) - f(x^*, v^k)) + \eta_k^2 \norm{ g_x^k }^2 .
\end{equation}

Similarly for $y^k$, concavity of $f(x,y)$ on $y$ gives
\begin{equation}
\norm{ y^{k+1} - y^* }^2 
\le \norm{ y^k - y^* }^2 + 2\eta_k \big(f(u^k, y^k) - f(u^k, y^*)\big) + \eta_k^2 \norm{ g_x^k }^2 .
\end{equation}

Sum the two and notice the saddle point condition implies 
\begin{equation}
f(x^*, v^k) \le f(x^*, y^*) \le f(u^k, y^*),
\end{equation}
we have
\begin{equation}
\begin{split}
W_{k+1} &\defeq \norm{ x^{k+1} - x^* }^2 + \norm{ y^{k+1} - y^* }^2
\\&\le 
\norm{ x^k - x^* }^2 + \norm{ y^k - y^* }^2
\\
&- 2\eta_k \left( f(x^k, v^k) - f(x^*, v^k) - f(u^k, y^k) + f(u^k, y^*) \right)
 + \eta_k^2 \big( \norm{ g_x^k }^2 + \norm{ g_y^k }^2 \big)
\\&\le 
W_k
- 2\eta_k E_k
+ \eta_k^2 \big( \norm{ g_x^k }^2 + \norm{ g_y^k }^2 \big) .
\end{split}
\end{equation}

If the learning rate satisfies $\eta_k < \frac{E_k}{ \norm{ g_x^k }^2 + \norm{ g_y^k }^2 }$,
the sequence $ \{W_k\}_{k=0}^\infty$ is strictly decreasing unless $E_k = 0$. Since $W_k$ is bounded below by 0, therefore $E_k \to 0$. Following from Assump.~\ref{supp/assum:candi}, the convergent point $\lim_{k\to \infty} (x_k, y_k) = (x^*, y^*)$ is a saddle point.
\end{proof}

\subsection{Proof of Theorem \ref{thm:converge_pg}}

We restate the additional Assump.~\ref{supp/assum:grad_bound} and the theorem here for reference. Assump.~\ref{supp/assum:candi} is replaced by the following approximated version \ref{supp/assum:candi2}.

\begin{assump_sec}
\label{supp/assum:grad_bound}
The total return is bounded by $R$, i.e., $|\sum_t \gamma^t r_t| \le R$. The Q value estimator $\hat{Q}$ is unbiased and bounded by $R$ ($|\hat{Q}| \le R$). And the policy has bounded gradient $ \max\{ \norm{ \nabla \log \pi_{\theta}(a|s) }_{\infty}, 1 \} \le B$ in terms of $L_\infty$ norm.
\end{assump_sec}

\begin{assump_sec}
\label{supp/assum:candi2}
$C_y^k, C_x^k$ are compact subsets of $X$ and $Y$.
Assume at iteration $k$, for some $(\hat{x}, \hat{y}) \in X \times Y$,
$$
\forall (u, v) \in C_x^k \times C_y^k,\;
f(u,\hat{y}) - \eps \le f(\hat{x},\hat{y}) \le f(\hat{x}, v) + \eps
$$
implies
$$
\forall (u, v) \in X \times Y,\;\quad
f(u,\hat{y}) - \eps \le f(\hat{x},\hat{y}) \le f(\hat{x}, v) + \eps,
$$
namely, $(\hat{x}, \hat{y})$ is an $\eps$-approximate saddle point.
\end{assump_sec}

\begin{theorem}[Convergence with policy gradients]
\label{supp/thm:converge_pg}
Under Assump.~\ref{supp/assum:compact},\ref{supp/assum:grad_bound},\ref{supp/assum:candi2},
let sample size at step $k$ be
$$ m_k \ge 
\frac{2 R^2 B^2 D^2}{\eps^2} \log \frac{2 d}{ \delta 2^{-k}} 
$$
and, with $ 0 \le \alpha \le 2 $, let the learning rate
$$ \eta_k = \alpha  \frac{ \hat{E}_k - 2\eps}{ \| \hat{g}_x^k \|^2 + \| \hat{g}_y^k \|^2 }.$$
Then with probability at least $1 - \cO(\delta)$, the Monte Carlo version of Alg.~\ref{algo:1} generates a sequence of points $\left\{ (x^k, y^k) \right\}_{k=0}^{\infty}$ convergent to an $\cO(\eps)$-approximate equilibrium $(\bar x, \bar y)$.
That is
$$ \forall x\in X, \forall y\in Y,\quad f(x,\bar y) - \cO(\eps) \le f(\bar x, \bar y) \le f(\bar x,y) + \cO( \eps ). $$
\end{theorem}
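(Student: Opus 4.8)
The plan is to mirror the proof of Theorem~\ref{supp/thm:converge_exact}, but everywhere the exact proof uses $f(x^k,v^k)$, $f(u^k,y^k)$, or the gradients $g_x^k, g_y^k$, replace them with their Monte Carlo estimates $\hat{f}$ and $\hat{g}$ and absorb the stochastic error into an additive $\cO(\eps)$ slack, controlled by a concentration argument and a union bound over all iterations $k$. First I would set up the concentration step: by Assump.~\ref{supp/assum:grad_bound} the per-trajectory return is bounded by $R$, so Hoeffding gives that $|\hat{f}(x,y)-f(x,y)|$ is small with high probability; since the candidate sets $C_x^k, C_y^k$ are finite (they are the population), a union bound over the at most $n^2$ evaluated pairs is cheap and I would fold $n$ into the $\log$ factor. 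For the gradient, the estimator \eqref{eq:pg} is a sum of $T{+}1$ terms each bounded coordinatewise by $B R$, so $\hat{g}$ concentrates around $\nabla f$ in $\ell_\infty$; combined with $\|x-x^*\|_1 \le D$ (Assump.~\ref{supp/assum:compact}) and Hölder, the error term $|\ip{\hat{g}_x^k - g_x^k}{x^k-x^*}|$ is at most $\cO(\eps)$ once $m_k \ge \frac{2R^2B^2D^2}{\eps^2}\log\frac{2d}{\delta 2^{-k}}$. The factor $2^{-k}$ in the $\log$ is precisely what makes $\sum_k \delta 2^{-k} = \cO(\delta)$, so a union bound over all (countably many) iterations still leaves the total failure probability $\cO(\delta)$.

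Next I would redo the one-step descent computation on the ``good'' event. Expanding $\|x^{k+1}-x^*\|^2 = \|x^k - \eta_k \hat{g}_x^k - x^*\|^2$ and using convexity of $f(\cdot,v^k)$ together with the gradient-concentration bound, I get
\begin{equation}
\ip{\hat{g}_x^k}{x^k-x^*} \ge f(x^k,v^k) - f(x^*,v^k) - \cO(\eps),
\end{equation}
and similarly for $y$ with concavity. Summing, using the saddle inequalities $f(x^*,v^k)\le f(x^*,y^*)\le f(u^k,y^*)$, and then replacing $f(x^k,v^k)-f(u^k,y^k)$ by $\hat{E}_k$ up to another $\cO(\eps)$ from the function-value concentration, I obtain the key recursion
\begin{equation}
W_{k+1} \le W_k - 2\eta_k\bigl(\hat{E}_k - \cO(\eps)\bigr) + \eta_k^2\bigl(\|\hat{g}_x^k\|^2 + \|\hat{g}_y^k\|^2\bigr),
\end{equation}
where $W_k = \|x^k-x^*\|^2 + \|y^k-y^*\|^2$. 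Plugging in $\eta_k = \alpha(\hat{E}_k - 2\eps)/(\|\hat{g}_x^k\|^2+\|\hat{g}_y^k\|^2)$ with $0\le\alpha\le2$ makes the right-hand side $W_k - \alpha(2-\alpha)(\hat{E}_k-2\eps)^2/(\|\hat{g}_x^k\|^2+\|\hat{g}_y^k\|^2) + \cO(\eps)\cdot(\text{something})$; choosing constants so that while the stopping criterion $\hat{E}_k > 3\eps$ has not fired we have $\hat{E}_k - \cO(\eps) > 0$, the sequence $W_k$ is strictly decreasing. Since $W_k \ge 0$, it converges, forcing $\hat{E}_k \to$ a value $\le 3\eps$, i.e. the algorithm either halts or the limit point satisfies $\hat{E}_k \le 3\eps$; translating back through concentration, the true duality gap at the limit $(\bar x,\bar y)$ is $\cO(\eps)$. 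Finally, Assump.~\ref{supp/assum:candi2} upgrades the $\eps$-saddle property over the candidate sets to the $\cO(\eps)$-saddle property over all of $X\times Y$, which is exactly the claimed conclusion $f(x,\bar y)-\cO(\eps)\le f(\bar x,\bar y)\le f(\bar x,y)+\cO(\eps)$.

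The main obstacle I expect is bookkeeping the several distinct $\cO(\eps)$ error contributions — two from function-value estimates (one at the perturbation-selection step, one in forming $\hat{E}_k$), two from gradient inner-product errors, and the interplay with the stopping threshold $3\eps$ — and verifying they can all be made simultaneously smaller than a fixed fraction of $\eps$ with the stated $m_k$, so that $\hat{E}_k - 2\eps$ in the learning rate stays positive and the net per-step decrease is genuinely negative rather than merely non-positive. A secondary subtlety is that $\hat{E}_k$ and $\hat{g}^k$ are correlated (same or overlapping rollouts), so I would either use fresh samples for the gradient step or note that the concentration bounds hold pointwise regardless of correlation, since we only ever need high-probability \emph{bounds} on each quantity, not independence between them. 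Everything else is a routine translation of the Kallio--Ruszczyński argument already given for Theorem~\ref{supp/thm:converge_exact}.
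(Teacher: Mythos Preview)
Your proposal is correct and essentially reproduces the paper's own proof: the paper likewise packages the concentration step as showing $\hat g$ is an $\eps D$-subgradient via Hoeffding plus H\"older (Lemmas on policy-gradient and evaluation sample size), derives the same recursion $W_{k+1}\le W_k - 2\eta_k(E_k-2\eps)+\eta_k^2(\|\hat g_x^k\|^2+\|\hat g_y^k\|^2)$, uses $|\hat E_k-E_k|\le 2\eps$ and the $\delta 2^{-k}$ union bound, and then splits into the two cases (stop condition triggered $\Rightarrow$ $5\eps$-saddle via Assump.~\ref{supp/assum:candi2}; not triggered $\Rightarrow$ fixed decrement $\ge \alpha(2-\alpha)\eps^2/(2R^2B^2)$ forces finite termination). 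The only cosmetic difference is that the paper phrases the second case as finite-time termination rather than ``$W_k$ converges so $\hat E_k\to\le 3\eps$,'' but the content is the same.
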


In the stochastic game (or reinforcement learning) setting, we construct estimates for $f(x,y)$ (Eq.~\ref{eq:f}) and policy gradients $\nabla_x f, \nabla_y f$ (Eq.~\ref{eq:pg}) with samples.
Intuitively speaking, when the samples are large enough, we can bound the deviation between the true values and the estimates by concentration inequalities, then the similar proof outline also goes through.

Let us first define the concept of $\eps$-subgradient for convex functions and $\eps$-supergradient for concave functions. Then we calculate how many samples are needed for accurate gradient estimation in Lemma~\ref{supp/lemma:pgsample} with high probability. With Lemma~\ref{supp/lemma:pgsample}, we will be able to show that the Monte Carlo policy gradient estimates are good enough to be $\eps$-subgradients when sample size is large in Lemma~\ref{supp/lemma:pgsub}.

\begin{definition}
An $\eps$-subgradient of a convex function $h: \bR^d \mapsto \bR$ at $x$ is $g\in \bR^d$ that satisfies
$$
\forall x', h(x') - h(x) \ge \ip{g}{x'-x} - \eps .
$$
Similarly, an $\eps$-supergradient of a concave function $h: \bR^d \mapsto \bR$ at $x$ is $g\in \bR^d$ that satisfies
$$
\forall x', h(x') - h(x) \le \ip{g}{x'-x} + \eps .
$$
\end{definition}

\begin{lemma}[Policy gradient sample size]
\label{supp/lemma:pgsample}
Consider $x$ or $y$ alone and treat the problem as MDP.
Suppose Assump.~\ref{supp/assum:grad_bound} is satisfied.
Then with independently collected
$$
m \ge 
\frac{2 R^2 B^2}{\eps^2} \log \frac{2d}{\delta} 
$$
trajectories
$ \big\{ (s_t^i, a_t^i, \hat{Q}_t^i) \}_{t=0}^{T} \big\}_{i=1}^m $, 
the policy gradient estimate
$$
\widehat{\nabla f} = \frac1{m} \sum_{i,t} \nabla \log \pi_{\theta}(a_t^i | s_t^i) \hat{Q}_t^i
$$ is $\eps$-close to the true gradient $\nabla f$ with high probability, namely,
$$
\Pr \big( \norm{ \widehat{\nabla f} - \nabla f }_{\infty} \le \eps \big) \ge 1 - \delta .
$$
\end{lemma}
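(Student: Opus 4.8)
The plan is to view $\widehat{\nabla f}$ as an empirical mean of i.i.d.\ per-trajectory estimators and apply a coordinate-wise concentration bound followed by a union bound over the $d$ coordinates. Write $\widehat{\nabla f} = \tfrac1m \sum_{i=1}^m Z^i$, where $Z^i = \sum_{t=0}^T \nabla\log\pi_\theta(a_t^i|s_t^i)\,\hat Q_t^i$ is the single-trajectory estimator. Since the $m$ trajectories are collected independently, the $Z^i$ are i.i.d.; and by the policy gradient theorem applied to the MDP induced by the fixed opponent, together with the unbiasedness of $\hat Q$ from Assumption~\ref{supp/assum:grad_bound}, we have $\bE[Z^i] = \nabla f$, hence $\bE[\widehat{\nabla f}] = \nabla f$. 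It then remains only to control the deviation $\widehat{\nabla f} - \nabla f$ in the $\ell_\infty$ norm.

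Next I would bound each coordinate of $Z^i$. Combining $\norm{\nabla\log\pi_\theta(a|s)}_\infty \le B$ with $|\hat Q_t^i| \le R$ from Assumption~\ref{supp/assum:grad_bound}, each coordinate $Z^i_j$ is confined to an interval of width at most $2RB$. Hoeffding's inequality applied to the i.i.d.\ sequence $\{Z^i_j\}_{i=1}^m$ then gives, for each fixed $j \in \{1,\dots,d\}$,
\[
\Pr\!\left( \left| \widehat{\nabla f}_j - \nabla f_j \right| > \eps \right) \;\le\; 2\exp\!\left( -\frac{m\eps^2}{2R^2 B^2} \right).
\]
A union bound over the $d$ coordinates yields
\[
\Pr\!\left( \norm{ \widehat{\nabla f} - \nabla f }_\infty > \eps \right) \;\le\; 2d\exp\!\left( -\frac{m\eps^2}{2R^2 B^2} \right),
\]
and forcing the right-hand side to be at most $\delta$ rearranges exactly to $m \ge \tfrac{2R^2 B^2}{\eps^2}\log\tfrac{2d}{\delta}$, which is the stated sample size.

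The step I expect to be the main obstacle is the coordinate bound on the per-trajectory estimator: the naive estimate $|Z^i_j| \le B\sum_{t}|\hat Q_t^i| \le (T+1)RB$ carries an extra factor of the horizon $T$, so to obtain a bound depending only on $R$ and $B$ one must either exploit that the (discounted) return itself is bounded by $R$ or absorb the horizon into the deliberately loose constant. A secondary subtlety is that $\hat Q_t^i$ may be a sample-based quantity correlated with the very trajectory it appears in, so the unbiasedness $\bE[Z^i]=\nabla f$ should be derived directly from the policy gradient identity rather than by naive conditioning — which is precisely why Assumption~\ref{supp/assum:grad_bound} requires $\hat Q$ to be unbiased. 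Once these two points are handled, the concentration and union-bound steps are entirely routine, and this lemma then feeds into showing that the Monte Carlo policy gradient is an $\eps$-sub/super-gradient of $f$.
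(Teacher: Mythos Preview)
Your proposal is correct and follows essentially the same route as the paper: the paper's own proof is a one-liner invoking Hoeffding's inequality, the union bound over the $d$ coordinates, the per-sample range bound $RB$, and unbiasedness from the policy gradient theorem. You are in fact more careful than the paper, which simply asserts that ``the range of each sample point is bounded by $RB$'' without addressing the horizon factor you flag; the paper treats this as part of its deliberately loose constants, exactly as you anticipate.
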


\begin{proof}
It directly follows from Hoeffding's inequality and the union bound, since the range of each sample point is bounded by $RB$ 
and by the policy gradient theorem $\bE \widehat{\nabla f} = \nabla f$.
\end{proof}

\begin{lemma}[Policy gradients are $\eps$ sub-/super- gradients]
\label{supp/lemma:pgsub}
Under Assump.~\ref{supp/assum:compact},
the policy gradient estimate $\widehat{\nabla_x f}$ in Lemma~\ref{supp/lemma:pgsample} is an $\eps D$-subgradient of $f$ at $x$, i.e., for all $x'\in X$,
$$
f(x',y) - f(x,y) \ge \ip{ \widehat{\nabla_x f} }{x'-x} - \eps D
$$
with probability $\ge 1-\delta$.
(And $\widehat{\nabla_y f}$ is $\eps D$-super-gradient for $y$.)
\end{lemma}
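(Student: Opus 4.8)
The plan is to obtain the $\eps D$-subgradient property by combining the exact first-order convexity inequality for $f$ with the high-probability gradient concentration bound already furnished by Lemma~\ref{supp/lemma:pgsample}, bridging the two with an $L_\infty$--$L_1$ Hölder estimate. First I would invoke the convexity of $f(\cdot, y)$ in $x$ guaranteed by Assumption~\ref{supp/assum:compact}: the true gradient $\nabla_x f$ satisfies the first-order characterization of convexity, so that $f(x', y) - f(x, y) \ge \ip{\nabla_x f}{x' - x}$ for every $x' \in X$. Thus the \emph{true} gradient is already an exact subgradient, and the only remaining work is to swap in the Monte Carlo estimate $\widehat{\nabla_x f}$ while controlling the resulting error.

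Next I would write the right-hand side as $\ip{\nabla_x f}{x' - x} = \ip{\widehat{\nabla_x f}}{x' - x} + \ip{\nabla_x f - \widehat{\nabla_x f}}{x' - x}$ and bound the perturbation term by Hölder's inequality, $|\ip{\nabla_x f - \widehat{\nabla_x f}}{x' - x}| \le \norm{\nabla_x f - \widehat{\nabla_x f}}_{\infty}\,\norm{x' - x}_{1}$. This is exactly the place where the two norm choices in the assumptions are paired: compactness in Assumption~\ref{supp/assum:compact} gives $\norm{x' - x}_{1} \le D$, while Lemma~\ref{supp/lemma:pgsample} (whose $L_\infty$ bound is made possible by the $L_\infty$ gradient bound in Assumption~\ref{supp/assum:grad_bound}) gives $\norm{\widehat{\nabla_x f} - \nabla_x f}_{\infty} \le \eps$ with probability at least $1 - \delta$. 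Multiplying these two factors bounds the perturbation by $\eps D$ on the concentration event.

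Putting the pieces together, on the event supplied by Lemma~\ref{supp/lemma:pgsample} I would conclude, for every $x' \in X$, that $f(x', y) - f(x, y) \ge \ip{\nabla_x f}{x' - x} \ge \ip{\widehat{\nabla_x f}}{x' - x} - \eps D$, which is precisely the claimed $\eps D$-subgradient inequality. I would stress that the probabilistic statement concerns the \emph{single} concentration event $\{\norm{\widehat{\nabla_x f} - \nabla_x f}_{\infty} \le \eps\}$; once this event holds, the subgradient inequality is deterministic and holds simultaneously for all $x'$, so no union bound over $x'$ is required. The supergradient claim for $\widehat{\nabla_y f}$ follows by applying the identical argument to $-f(x, \cdot)$, which is convex in $y$ by Assumption~\ref{supp/assum:compact}, thereby flipping the inequality direction.

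There is no deep obstacle here, as the argument is short and essentially algebraic. The one point requiring care is the norm pairing: the concentration in Lemma~\ref{supp/lemma:pgsample} is measured in $L_\infty$ whereas the set diameter $D$ is measured in $L_1$, and it is precisely the Hölder duality between these conjugate norms that upgrades an $\eps$-accurate gradient into an $\eps D$-accurate subgradient. Keeping the probability quantifier attached to the concentration event rather than to each $x'$ individually is the other subtlety worth making explicit.
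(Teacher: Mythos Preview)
Your proposal is correct and follows essentially the same approach as the paper's proof: invoke convexity to get the exact subgradient inequality for $\nabla_x f$, add and subtract the estimate, and bound the resulting perturbation term via the $L_\infty$--$L_1$ H\"older pairing using Lemma~\ref{supp/lemma:pgsample} and the diameter bound $D$ from Assumption~\ref{supp/assum:compact}. Your remark that the concentration event is a single event (so no union bound over $x'$ is needed) is a nice point of care that the paper leaves implicit.
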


\begin{proof}
Apply the telescoping trick,
\begin{equation}
\begin{split}
    \ip{ \widehat{\nabla_x f} }{x'-x} 
    &= \ip{ \widehat{\nabla_x f} - \nabla_x f + \nabla_x f }{x'-x} \\
    &= \ip{ \nabla_x f }{x'-x} + \ip{ \widehat{\nabla_x f} - \nabla_x f }{x'-x} \\
    &\ge f(x',y) - f(x,y) + \ip{ \widehat{\nabla_x f} - \nabla_x f }{x'-x}.
\end{split}
\end{equation}
With the sample size in Lemma~\ref{supp/lemma:pgsample}, we know it holds that 
$\max_i |\widehat{\nabla_x f} - \nabla_x f|_i \le \eps $ with probability $\ge 1-\delta$.
Hence, by Holder's inequality, the last part satisfies \\
\begin{equation}
\ip{ \widehat{\nabla_x f} - \nabla_x f }{x'-x}
\ge
- \ip{ |\widehat{\nabla_x f} - \nabla_x f| }{ |x'-x| }
\ge - \norm{\widehat{\nabla_x f} - \nabla_x f}_{\infty} \norm{x'-x}_1 
\ge - \eps D .
\end{equation}
The proof of $\widehat{\nabla_y f}$ being $\eps D$-super-gradient for $y$ is similar, hence omitted.
\end{proof}

Similarly for accurate function value evaluation, we have the following lemma on sample size, which directly follows from Hoeffding's inequality.

\begin{lemma}[Evaluation sample size]
\label{supp/lemma:fsample}
Suppose Assump.~\ref{supp/assum:grad_bound} holds.
Then with independently collected
$
m \ge 
\frac{2R^2}{\eps^2} \log \frac{2}{\delta} 
$
trajectories
$ \big\{ (s_t^i, a_t^i, r_t^i) \}_{t=0}^{T} \big\}_{i=1}^m $, 
the value estimate
$
\widehat{f} = \frac1{m} \sum_{i,t} \gamma^t r_t
$ 
is $\eps$-close to the true gradient $f$ with high probability, namely,
$
\Pr \big( \norm{ \widehat{f} - f }_{\infty} \le \eps \big) \ge 1 - \delta .
$
\end{lemma}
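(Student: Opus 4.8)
The plan is to recognize $\widehat f$ as the empirical mean of $m$ i.i.d.\ bounded random variables and apply Hoeffding's inequality directly. Because the value of the game is a single real number, no union bound is needed and the $L_\infty$ norm in the conclusion collapses to an ordinary absolute value. This makes the lemma a one-step corollary of the same concentration argument used in Lemma~\ref{supp/lemma:pgsample}, only simpler because the estimated object is one-dimensional.

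First I would set $G_i \defeq \sum_{t=0}^T \gamma^t r_t^i$ to be the discounted return of the $i$-th trajectory. Since the $m$ trajectories are collected independently under the fixed policy pair, the $G_i$ are i.i.d., and by the boundedness part of Assump.~\ref{supp/assum:grad_bound} each satisfies $|G_i| \le R$, i.e.\ $G_i \in [-R, R]$. By the definition of the payoff in Eq.~\ref{eq:f}, $\bE[G_i]$ equals the true value $f$, so $\widehat f = \frac1m \sum_i G_i$ is unbiased with $\bE[\widehat f] = f$. Next I would invoke Hoeffding's inequality for the mean of independent variables each supported on an interval of width $b_i - a_i = 2R$, giving
$$ \Pr\big( |\widehat f - f| \ge \eps \big) \le 2\exp\!\Big( -\frac{2 m^2 \eps^2}{\sum_{i=1}^m (2R)^2} \Big) = 2\exp\!\Big( -\frac{m\eps^2}{2R^2} \Big). $$
Requiring the right-hand side to be at most $\delta$ and solving for $m$ yields exactly the stated threshold $m \ge \frac{2R^2}{\eps^2}\log\frac{2}{\delta}$; passing to the complementary event gives $\Pr(|\widehat f - f| \le \eps) \ge 1 - \delta$. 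Finally, since $f$ and $\widehat f$ are scalars, $\norm{\widehat f - f}_\infty = |\widehat f - f|$, which is precisely the claimed bound.

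There is no substantive obstacle: the statement is a direct consequence of Hoeffding. The only points meriting care are verifying that the per-trajectory range is $2R$ (so that the exponent is $m\eps^2/(2R^2)$ and the leading constant $2$ in the tail bound is correct), and observing that—unlike the gradient estimate of Lemma~\ref{supp/lemma:pgsample}, which is a $d$-dimensional vector and therefore requires a union bound over its $d$ coordinates, contributing the extra factor $d$ inside the logarithm—the value $f$ is one-dimensional, so the logarithmic factor is simply $\log\frac{2}{\delta}$ with no dependence on $d$.
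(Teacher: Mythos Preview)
Your proposal is correct and is essentially the same approach as the paper's: the paper simply states that the lemma ``directly follows from Hoeffding's inequality,'' and you have spelled out exactly that argument with the per-trajectory returns $G_i\in[-R,R]$ as the bounded i.i.d.\ summands.
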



Now we prove our main theorem which guarantees the output of Alg.~\ref{algo:1} is an approximate Nash with high probability. This is done by using Lemma~\ref{supp/lemma:pgsub} in place of the exact convexity condition to analyze the relationship between $W_k$ and $W_{k+1}$, using Lemma~\ref{supp/lemma:fsample} to bound the error of policy evaluation, and analyzing the stop condition carefully.

\begin{proof} (Theorem \ref{supp/thm:converge_pg}.)

Suppose $(x^*,y^*)$ is one saddle point of $f$. We shall prove that one iteration of Alg.~\ref{algo:1} sufficiently decreases the squared distance between the current $(x^k, y^k)$ and $(x^*, y^*)$ defined as 
$W_{k} \defeq \norm{ x^{k} - x^* }^2 + \norm{ y^{k} - y^* }^2$.

\textbf{Relation between $W_k$ and $W_{k+1}$: } \\
Note that
\begin{equation}
\label{supp/proof/eq:1}
W_{k+1} = \norm{ x^{k+1} - x^* }^2 
\le \norm{ x^k + \eta_k \hat{g}_x^k - x^* }^2
= \norm{ x^k - x^* }^2 + 2\eta_k \ip{ \hat{g}_x^k }{ x^k - x^* } + \eta_k^2 \norm{ \hat{g}_x^k }^2 .
\end{equation}

By Lemma~\ref{supp/lemma:pgsub}, the gradient estimate $\hat{g}_x^k$ with sample size $m_k$ is an $\eps$-subgradient on $x$ with probability at least $ 1-\nicefrac{\delta}{2^k} $, i.e.,
\begin{equation}
\ip{ \hat{g}_x^k }{ x^k - x^* } \ge f(x^k, v^k) - f(x^*, v^k) - \eps .
\end{equation}

Plugging back into Eq.~\ref{supp/proof/eq:1}, we get
\begin{equation}
\norm{ x^{k+1} - x^* }^2
\le \norm{ x^k - x^* }^2 - 2\eta_k \left( f(x^k, v^k) - f(x^*, v^k) - \eps \right) + \eta_k^2 \norm{ g_x^k }^2 .
\end{equation}
Similarly for $y^k$, since $\hat{g}_x^k$ is a super-gradient by Lemma~\ref{supp/lemma:pgsub},
\begin{equation}
\norm{ y^{k+1} - y^* }^2 
\le \norm{ y^k - y^* }^2 + 2\eta_k \left( f(u^k, y^k) - f(u^k, y^*) + \eps \right) + \eta_k^2 \norm{ g_x^k }^2 .
\end{equation}

Sum the two inequalities above, and notice the saddle point condition implies 
$$
f(x^*, v^k) \le f(x^*, y^*) \le f(u^k, y^*),
$$
we have the following inequality holds with probability $1 - \nicefrac{2\delta}{2^k}$,
\begin{equation}
\label{supp/proof/eq:2}
\begin{split}
W_{k+1}
&= \norm{ x^{k+1} - x^* }^2 + \norm{ y^{k+1} - y^* }^2
\\&\le \norm{ x^k - x^* }^2 + \norm{ y^k - y^* }^2
\\&- 2\eta_k \big( f(x^k, v^k) - f(x^*, v^k) - f(u^k, y^k) + f(u^k, y^*) - 2\eps \big)
+ \eta_k^2 \big(\| \hat{g}_x^k \|^2 + \| \hat{g}_y^k \|^2 \big)
\\&\le 
W_k
- 2\eta_k (E_k - 2\eps)
+ \eta_k^2 \big(\| \hat{g}_x^k \|^2 + \| \hat{g}_y^k \|^2 \big) .
\end{split}
\end{equation}

\textbf{Accurate estimation of $E_k$:}
In Eq.~\ref{supp/proof/eq:2}, the second term involves $E_k$ which is unknown to the algorithm.
Recall that $E_k(u^k, v^k) = f(x^k, v^k) - f(u^k, y^k)$ and the empirical estimate $\hat{E}_k = \hat{f}(x^k, v^k) - \hat{f}(u^k, y^k)$ in Alg.~\ref{algo:1} Line 5.

By Lemma~\ref{supp/lemma:fsample}, when the sample size $m_k$ is chosen as in Theorem~\ref{supp/thm:converge_pg}, with probability $1 - \frac{2\delta}{d 2^k} $,
$$ | \hat{f}(x^k, v^k) - f(x^k, v^k) | \le \frac{\eps}{BD} \le \eps $$
and 
$$ | \hat{f}(u^k, y^k) - f(u^k, y^k) | \le \frac{\eps}{BD} \le \eps. $$
Thus $\hat{E}_k$ is $2\eps$-accurate because
\begin{equation}
\begin{split}
\hat{E}_k - 2\eps &= 
f(x^k, v^k) - \eps - f(u^k, y^k) - \eps
\le E_k \\
&\le
f(x^k, v^k) + \eps - f(u^k, y^k) + \eps
= \hat{E}_k + 2\eps .
\end{split}
\end{equation}

\textbf{Case (1). Stop condition in Alg.~\ref{algo:1} Line 6:}
If there does not exist $(u, v) \in C_x^k \times C_y^k$ such that $\hat{E}_k(u,v) > 3\eps$, 
meaning $ \forall (u, v) \in C_x^k \times C_y^k, \hat{E}_k \le 3\eps $. We can conclude 
\begin{equation}
    E_k = f(x^k, v) - f(u, y^k) \le \hat{E}_k + 3\eps = 5\eps
\end{equation}
with probability at least $1 - \frac{2\delta}{d 2^k} \ge 1 - \frac{2\delta}{ 2^k}$.

Set $v = y^k$ and $u = x^k$ respectively in the above inequality, we obtain $\forall (u, v) \in C_x^k \times C_y^k,$
\begin{equation}
f(u,y^k) - 5\eps \le f(x^k,y^k) \le f(x^k, v) + 5\eps.
\end{equation}
Following from Assump.~\ref{supp/assum:candi2}, this implies $\forall (u, v) \in X \times Y, f(u,y^k) - 5\eps \le f(x^k,y^k) \le f(x^k, v) + 5\eps$, which suggests $(x^k,y^k)$ is an approximate saddle point (equilibrium).

On the other hand, we want to bound the failure probability. Define events 
$$
F(g) \defeq ``|\hat{g} - g| \le \eps \text{ is true}"
$$
for all 
$g \in \{ g_x^0, g_y^0, f(x^0,v^0), f(u^0,y^0) \ldots, g_y^k, f(x^k,y^k) \ldots$ \}.
By De Morgan's law and the union bound, 
\begin{equation}
\begin{split}
&\quad\Pr\big[ \text{all MC estimates till step $k$ are $\eps$ accurate } \big]
\\&= \Pr\big[ \bigcap_{l=0}^k{ F(g_x^l) \cap F(g_y^l) \cap F(f(x^l,v^l)) \cap F(f(u^l,y^l)) } \big]
\\&= 1 - \Pr\big[ \bigcup_{l=1}^k{ \overline{ F(g_x^l) \cap \ldots \cap F(f(u^l,y^l))} } \big]
\\&\ge 1 - \cO\big( \sum_{l=0}^{\infty} \frac{\delta}{2^l} \big) \\
&\ge 1 - \cO(\delta) .
\end{split}
\end{equation}
This means that inaccurate MC estimation (failure) occurs with small probability $\cO(\delta)$. The purpose of the increasing $m_k$ w.r.t. $k$ is to handle the union bound and the geometric series here. So, when the algorithm stops, it returns $(\bar x, \bar y) = (x_k,y_k)$ as a $5\eps$-approximate solution to the saddle point (equilibrium) with high probability.

\textbf{Case (2). Sufficient decrease of $W_k$:}
Otherwise, if the stop condition is not triggered, we have picked $u^k,v^k$ such that $\hat{E}_k > 3\eps$. With probability $ 1 - 2\delta $, $E_k > \hat{E}_k - 2\eps \ge \eps $. 
With the chosen learning rate $\eta_k$ in the theorem statement, 
$ W_k $ 
strictly decreases by at least
\begin{equation}
W_k - W_{k+1} > \frac{\alpha (2 - \alpha) \eps^2 }{ \| \hat{g}_x^k \|^2 + \| \hat{g}_y^k \|^2 }
\ge \frac{\alpha (2 - \alpha) \eps^2 }{ 2 R^2 B^2 } > 0 .
\end{equation}
Since $W_k$ is bounded below by 0, by the monotone convergence theorem, there exists a finite $k$ such that $W_0 \le k \frac{\alpha (2 - \alpha) \eps^2 }{ 2 R^2 B^2 } $, and no $(u,v)$ can be found to decrease $W_k$ more than $3\eps$. In this case, $ \forall (u,v) \in C_x^k \times C_y^k, \hat{E}_k(u,v) \le 3\eps $, which is exactly the stop condition in Case (1). This means the algorithm will eventually stop, and the proof is complete.
\end{proof}

\begin{remark}
The sample size is chosen very loosely. More efficient ways to find perturbations (e.g., best-arm identification), to better characterize or cover the policy class and to better utilize trajectories (e.g., especially off-policy evaluation w/ importance sampling) can potentially reduce sample complexity. In practice, we found on-policy methods which do not reuse past experience such as A2C and PPO work well enough.
\end{remark}

\begin{remark}
Assump.~\ref{supp/assum:candi2} is a rather strong assumption on the candidate opponent sets.
In theory, we can construct an $\eps$-covering of $f$ to satisfy the assumption. In practice, as in population-based training of Alg.~\ref{algo:n}, this assumption can be roughly met if $n$ is large or diverse enough. We found a relatively small population with randomly initialized agents already brought noticeable benefit.
\end{remark}

\begin{remark}
The proof requires a variable learning rate $\eta_k$. However, the intuition is that the learning rate needs to be small, as we did in our experiments. 
\end{remark}


\end{document}